\documentclass[letterpaper,12pt]{article}

\usepackage{fullpage}

\usepackage{authblk}
\usepackage{amsmath,amsthm,amssymb}
\usepackage{url}
\usepackage{graphicx}
\usepackage{subfig}
\usepackage{cite}
\usepackage{comment}

\title{Convergence of Nearest Neighbor Pattern Classification with Selective Sampling}
\author{Shaun N.~Joseph\footnote{Correspondence: snjoseph@gmail.com}}
\author{Seif Omar Abu Bakr}
\author{Gabriel Lugo}
\affil{mZeal Communications, Inc. \\ Fitchburg MA 01420}

\newtheorem{thm}{Theorem}
\newtheorem*{cor}{Corollary}
\newtheorem{lem}[thm]{Lemma}

\newtheorem*{bclem2}{Second Borel-Cantelli Lemma}
\newtheorem{prp}[thm]{Proposition}

\theoremstyle{definition}

\theoremstyle{remark}
\newtheorem{rem}{Remark}


\newcommand{\thmref}[1]{Theorem~\ref{#1}}
\newcommand{\secref}[1]{\S\ref{#1}}
\newcommand{\lemref}[1]{Lemma~\ref{#1}}
\newcommand{\prpref}[1]{Proposition~\ref{#1}}

\newcommand{\figref}[1]{Figure~\ref{#1}}
\newcommand{\remref}[1]{Remark~\ref{#1}}

\DeclareMathOperator{\supp}{supp}
\DeclareMathOperator{\mode}{mode}
\DeclareMathOperator{\modefreq}{modefreq}

\begin{document}
\maketitle

\section{Introduction}
\label{sec:intro}

In the panoply of pattern classification techniques, few enjoy the
intuitive appeal and simplicity of the \emph{nearest neighbor rule}:
given a set of samples in some domain space whose value under some function
is known, estimate the function anywhere in the domain by giving the value
of the nearest sample (relative to some metric).
More generally, one may use the modal value of the $m$ nearest samples,
where $m \geq 1$ is some fixed integer constant, although $m=1$ is known to
be admissible in the sense that there is no $m > 1$ that is asymptotically
superior in terms of prediction error \cite{cover-hart}.

The nearest neighbor rule is a \emph{nonparametric} technique; that is,
it does not make any assumptions about the character of the underlying
function (eg, linearity) and proceed to estimate parameters modulo this
assumption (eg, slope and intercept). Furthermore, it is extremely general,
requiring in principle only that the domain be a metric space.

The classic paper on nearest neighbor pattern classification is due to
Cover and Hart~\cite{cover-hart}; a textbook treatment appears in
Duda et al.~\cite{pattern-classification}. Both presentations adopt a
probabilistic setting, demonstrating that if the samples are independent
and identically-distributed (iid),
the probability of error converges to no more than twice
the optimal probability of error, the so-called Bayes risk.
In a fully deterministic setting, since the Bayes risk is zero, this amounts
to showing that the nearest neighbor rule with iid sampling converges to the
true pattern. Cover~\cite{cover-estimation} extends these results to the
estimation problem.

Obviously iid sampling is almost certain to produce samples that are
superfluous in the sense that the prediction remains equally accurate even
if these samples are removed. Superfluous samples are harmful in two senses:
first, sampling may be---and usually is---difficult in one way or another;
second, it is computationally more expensive to search for a point's nearest
neighbor as the size of the sample set increases.

The latter concern can be addressed by \emph{editing} techniques, in which a
large set of preclassified samples is shorn down by deleting samples
according to some rule. Wilson~\cite{wilson-edited} shows that convergence
holds when one deletes samples that are misclassified by their
$m \geq 2$ nearest neighbors, and then uses the nearest neighbor rule on the
remaining samples. (Wagner~\cite{wagner-edited} simplifies the proof
considerably.)
An conceptually simpler algorithm involving Voronoi diagrams is given in
\cite{pattern-classification}, but this is unlikely to be practical for
reasons described in \secref{sec:heu:nmc:Knn}.

Of course editing can only delete samples already taken and cannot address
the desire to sample parsimoniously in the first place.
This is achieved by \emph{selective sampling}, wherein each sample
is selected from a pool of candidates according to some heuristic function.
The trick lies in identifying some heuristic such that the odds of choosing a
superfluous (or otherwise ``low-information'') sample are reduced.
Selective sampling falls within the broader paradigm of active learning,
which is surveyed by Settles~\cite{settles}.\footnote{Settles uses the
  term \emph{pool-based sampling}, but selective sampling seems to be
  the term of art among researchers using nearest neighbor techniques.}

Fujii et al.~\cite{fujii} present a nearest neighbor algorithm for
word sense disambiguation using selective sampling. The problem is interesting
because the domain space is non-Euclidean, but the selection heuristic is
quite specific to problems in natural language.
Lindenbaum et al.~\cite{lindenbaum} give a more general treatment,
including very abstract descriptions of the selection heuristic.
However, they assume that the domain is Euclidean and that the true pattern
conforms to a particular random field model.
The selection heuristic is also complex and computationally expensive.

This paper seeks to take the intuition of selective sampling back to the
extremely general setting of \cite{cover-hart}, assuming not much more than
a metric domain on which exists a probability measure. We will give three
selection heuristics and prove that their nearest neighbor rule predictions
converge to the true pattern; furthermore, the first and third of these
algorithms are computationally cheap, with complexity growing only linearly
in the number of samples in the naive implementation.
We believe that proving convergence in such a general setting with such
simple algorithms constitutes an important advance in the art.

Following the present introductory section,
in \secref{sec:prelim} we establish the problem's formal setting.
\secref{sec:heu} contains the key convergence proofs,
plus additional results and remarks relating to the practical use of the
methods. We conclude the paper and indicate avenues for future research,
including the crucial question of convergence rates, in \secref{sec:conc}.

\section{Preliminaries}
\label{sec:prelim}

In this section we lay down some common definitions and notation that we will
use throughout the rest of the paper. The object of our efforts will be to
approximate a classifier function $f : X \to Y$,
the so-called \emph{true function}, where the domain $X$ is a metric
space equipped with metric $d$ and probability measure $\mu$; and the
codomain $Y$ is any countable set. We approximate $f$ using a sequence
of \emph{samples} $\{z_n\}_{n=1}^{\infty}$ from $X$ collected into sets
$Z_n = \{ z_1 , z_2 , \ldots , z_n \}$. The \emph{prediction function}
$\zeta_n : X \to Y$ operates via the nearest neighbor rule on $Z_n$; that is:
\begin{equation}
  \label{eq:NNR}
  \zeta_n(x) = f(z_\iota) \textrm{ where }
  \iota = \arg \min_{i \leq n} d(x,z_i)
  \textrm{.}
\end{equation}
(If more than one sample achieves the minimum distance to $x$, choose one
uniformly at random.)

It should be noted that in contrast to the works cited in \secref{sec:intro},
the true function is fully determined, not probabilistic.
We have chosen a deterministic setting for two reasons.
In the first place, we confess, it makes the analysis much easier.
More fundamentally, however, the algorithms we develop in \secref{sec:heu:nmc}
break with the concept that the nearest sample to any point approaches
arbitrarily near to the point, which is a critical assumption
underlying the calculation of prediction error in terms of Bayes risk.
Recovering the probabilistic setting is an area of future research and
discussed in \secref{sec:conc}.

Given a point $x \in X$, the \emph{(open) $\epsilon$-ball} about $x$,
denoted $B_\epsilon(x)$, is the set of points at distance strictly less than
$\epsilon$ from $x$. The \emph{support} of $\mu$, or $\supp(\mu)$, are the
points of $x \in X$ such that the $\epsilon$-ball about $x$ has positive
measure for all $\epsilon > 0$.

We will be concerned with two types of convergence.
Let $\{f_n\}_{n=1}^{\infty}$ be a sequence of functions.
Given $S \subseteq X$, the sequence
\emph{converges pointwise to $f$ on $S$} iff
\begin{equation*}
  \forall s \in S : \lim_{n \to \infty} f_n(s) = f(s)
  \textrm{.}
\end{equation*}
We also write: $f_n \to f$ pointwise on $S$;
or for any particular $s \in S$: $f_n(s) \to f(s)$.
Furthermore, the sequence of functions \emph{converges in measure to $f$} iff
it converges pointwise to $f$ on all of $X$ save for a set of measure zero.
We write this more briefly as: $f_n \to f$ in measure.

When we say an event occurs \emph{almost surely}, \emph{almost certainly},
or suchlike, we mean that it occurs with probability one with respect
to the probability measure $\mu$;
the phrases \emph{almost never}, \emph{almost impossible}, and so forth,
are attached to an event with probability zero.
In this work, these terms will typically be associated with invocations and
implications of the following classical lemma in probability theory.

\begin{bclem2}
  Let $\{E_n\}_{n=1}^\infty$ be a sequence of independent events in some
  probability space.
  If
  \begin{equation*}
    \sum_{n=1}^\infty \Pr[E_n] = \infty
  \end{equation*}
  then
  \begin{equation*}
    \Pr[\limsup E_n] = 1
    \textrm{.}
  \end{equation*}
  In other words, almost surely $E_n$ occurs for infinitely many $n$.
\end{bclem2}

Our goal is to show that $\zeta_n \to f$ pointwise on $\supp(\mu)$
almost surely under certain instantiations of the following stochastic process:
\begin{enumerate}
\item $n \gets 1$.
\item\label{oegloop} Select at random (per $\mu$) a $\kappa(n)$-set
  $S \subseteq X$ of \emph{candidates}.
\item $z_n \gets \arg \max_{s \in S} \Phi(s,Z_{n-1})$.
  (If there is more than one candidate that achieves the maximum,
  choose one uniformly at random.)
\item $n \gets n+1$.
\item Go to Step~\ref{oegloop}.
\end{enumerate}
We call this process $\mathcal{S}(\kappa,\Phi)$:
it is parameterized by a function $\kappa : \mathbb{Z}^+ \to \mathbb{Z}^+$
that determines the number of candidates;
and a \emph{selection heuristic}
$\Phi : X \times \mathcal{P}(X) \to \mathbb{R}$ that somehow expresses how
``desirable'' a candidate would be as a sample,
given the preceding samples.\footnote{In \cite{fujii,lindenbaum} the selection
  heuristic is called the \emph{utility function}, but we prefer the more
  elastic term.}

Finally, if $X$ is \emph{separable}---that is, if it contains a countable dense
subset---then we will show that in fact $\zeta_n \to f$ in measure as an
immediate corollary of the corresponding pointwise convergence result.

\subsection{Selective sampling may fail badly}
\label{sec:prelim:fail}

Before starting in earnest, it may be valuable to understand how an
arbitrary sequence of samples may fail to converge to $f$. Indeed, we shall
give an example such that the prediction functions $\zeta_n$ have monotonically
decreasing accuracy. Although our example will be rather contrived, it will
show that some care must be taken when designing the sampling process.

Let $X = [0,1] \subset \mathbb{R}$ with Euclidean metric $d$ and Lebesgue
measure $\mu$; and let $Y = \{0,1\}$. Now let
\begin{equation*}
  X_1 = \{0\} \cup \bigcup_{i=1}^{\infty}
  \left( \left. \frac{1}{2^i} , \frac{3}{2^{i+1}} + \epsilon_i \right] \right.
  \textrm{ where }
  0 < \epsilon_i < \frac{1}{2^{i+1}}
\end{equation*}
and take $f$ to be the indicator function on $X_1$.

Suppose we take samples according to the process $z_1 = 0$ and $z_n = 2^{2-n}$
for $n \geq 2$. Observe that for each $\zeta_n$ where $n \geq 2$, we are
newly accurate on a set of measure less than $\epsilon_{n-1}$,
but newly inaccurate on a set of measure greater than $2^{-n}$; and since
$\epsilon_{n-1} < 2^{-n}$ by definition, we become less accurate with each
increment of $n$.

It should also be observed that the sample sequence corresponds to an
intuitively very reasonable selection method: $z_n$ is the midpoint of two
samples with different $f$-values (for all $n \geq 3$).

\section{Selective sampling heuristics}
\label{sec:heu}

We consider two choices for the selection heuristic $\Phi$:
distance to sample set and (two variants of) non-modal count. Of the two,
non-modal count is certainly the more interesting and useful; the distance to
sample set heuristic mostly mimics the action of iid sampling, although it
turns out to be a useful way to present some basic ideas that will be used
again in more complicated settings.

Let us articulate some definitions that connect $f$ with the geometry and
measure of $X$.
For any $y \in Y$, let $X_y = f^{-1}(y) = \{ x \in X; f(x)=y \}$.
A subset $R \subseteq X$ is called \emph{$f$-connected} iff
$R$ is connected and there exists $y \in Y$ such that $R \subseteq X_y$.
An $f$-connected set that is maximal by inclusion in $X$ is an
\emph{$f$-connected component}. Consider a point $b \in X$ on the boundary of
an $f$-connected component: we say $b$ is an \emph{$f$-boundary point} iff
$f(b) = y$ yet $\mu(B_\epsilon(b) \cap (X-X_y)) > 0$ for all $\epsilon > 0$.
The \emph{$f$-boundary} is simply the set of all $f$-boundary points.

As a general rule, we only consider points off the $f$-boundary.
This assumption is necessary for the nearest neighbor rule to be valid, since
any $f$-boundary point can have a ``misleading'' sample arbitrarily close to
it. In order to free ourselves of the burden of the $f$-boundary, we typically
assume that it has measure zero---which is generally a very natural
assumption, provided that $Y$ is countable.

\subsection{Distance to sample set}
\label{sec:heu:dist}

The \emph{distance to sample set heuristic} $\Phi$ is defined very simply as:
\begin{equation}
  \label{eq:dist}
  \Phi(x,S) = d(x,S) = \inf \{ d(x,s); s \in S \}
  \textrm{.}
\end{equation}
This leads to our first convergence theorem.

\begin{thm}
  \label{thm:dist-convergence-support}
  Let $\{z_n\}_{n=1}^{\infty}$ be determined by the process
  $\mathcal{S}(\kappa,\Phi)$ where
  $\kappa : \mathbb{Z}^+ \to \mathbb{Z}^+$ is such that,
  for any $0 < \rho \leq 1$,
  \begin{equation*}
    \sum_{n=1}^\infty \rho^{\kappa(n)} = \infty
  \end{equation*}
  and $\Phi$ is defined by \eqref{eq:dist}.
  If $x \in \supp(\mu)$ is not an $f$-boundary point,
  then $\zeta_n(x) \to f(x)$ with probability one.
\end{thm}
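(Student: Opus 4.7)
The plan is to show that almost surely, for all sufficiently large $n$, the nearest sample in $Z_n$ to $x$ lies in a fixed ball around $x$ on which $f$ agrees with $f(x)$, up to a null set. Since $x$ is not an $f$-boundary point, I can fix $\epsilon_0 > 0$ with $\mu(B_{\epsilon_0}(x) \setminus X_{f(x)}) = 0$; call this null set $N$. Since $x \in \supp(\mu)$, also $\rho := \mu(B_{\epsilon_0}(x)) > 0$. These two facts drive the entire argument.

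For each $n$, let $E_n$ be the event that \emph{every} one of the $\kappa(n)$ candidates drawn at step $n$ falls inside $B_{\epsilon_0}(x)$. Because the candidates at step $n$ are i.i.d.\ per $\mu$ and the candidate sets at distinct steps are mutually independent, the events $\{E_n\}$ are independent with $\Pr[E_n] = \rho^{\kappa(n)}$. Applying the hypothesis on $\kappa$ with this $\rho \in (0,1]$ gives $\sum_n \Pr[E_n] = \infty$, so by the Second Borel-Cantelli Lemma, almost surely $E_n$ occurs infinitely often. The key observation is that on $E_n$, \emph{any} choice of $z_n$ from the candidate set---regardless of the value of $\Phi$---must lie in $B_{\epsilon_0}(x)$. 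Consequently, almost surely some $z_{n_0} \in B_{\epsilon_0}(x)$, and thereafter $d(x, Z_n) \le d(x, z_{n_0}) < \epsilon_0$ for every $n \ge n_0$.

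Separately, each individual candidate has probability zero of landing in the null set $N$, and only countably many candidates are drawn across the whole run; hence by countable subadditivity almost surely no sample ever enters $N$. Combining this with the previous paragraph, almost surely for all $n \ge n_0$ the nearest neighbor of $x$ in $Z_n$ lies in $B_{\epsilon_0}(x) \setminus N \subseteq X_{f(x)}$, giving $\zeta_n(x) = f(x)$. The main conceptual hurdle is that the heuristic $\Phi$ could in principle push selected samples away from $x$ for arbitrarily long stretches; the trick is the ``all candidates close'' event, which collapses the probability estimate to a quantity depending on $\kappa$ alone and is precisely calibrated to the hypothesis $\sum \rho^{\kappa(n)} = \infty$. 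The rest is routine bookkeeping.
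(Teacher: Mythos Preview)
Your proof is correct, but it takes a different and more elementary route than the paper's. You use the ``brute force'' event that \emph{all} $\kappa(n)$ candidates at step $n$ land in $B_{\epsilon_0}(x)$; this has probability exactly $\rho^{\kappa(n)}$, the $E_n$ are independent because each step draws fresh candidates, and Borel--Cantelli fires directly from the hypothesis on $\kappa$. The argument is $\Phi$-agnostic: it never uses that $\Phi$ is the distance-to-sample-set heuristic, and would prove the same conclusion for \emph{any} selection rule.

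The paper deliberately avoids this route (and in fact calls it out as ``brute force'' in \remref{rem:borel-cantelli}). Instead, it defines $Q_n^{r/2}=\{q:\Phi(q,Z_n)\ge r/2\}$ and considers the event that \emph{one} candidate lands in $B_{r/2}(x)$ while all others land in $X\setminus Q_n^{r/2}$; since $\Phi$ is distance to the sample set, the lone candidate near $x$ then has the largest $\Phi$-value and is selected. This event has probability $\kappa(n{+}1)\,\mu(B_{r/2}(x))\,(1-\mu(Q_n^{r/2}))^{\kappa(n{+}1)-1}$, and the paper argues that $1-\mu(Q_n^{r/2})$ is eventually bounded away from zero, so the sum diverges. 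What this buys is an explanation of how the heuristic actually helps: only one candidate needs to be near $x$, not all of them, which is the point of selective sampling. Your argument, by contrast, gives the same probability bound one would get for $\kappa(n)=1$ (iid sampling) or worse, so while it establishes the theorem, it sheds no light on why the distance heuristic is doing anything useful. Both proofs are valid; the paper's is longer but more informative about the mechanism.
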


\begin{proof}
  Let $Q_n^\epsilon \subseteq X$ denote the points $q$ such that
  $\Phi(q,Z_n) \geq \epsilon$, and let $Q^\epsilon = \limsup Q_n^\epsilon$
  (ie, $q \in Q^\epsilon$ iff $q$ appears in infinitely many $Q_n^\epsilon$).
  For any point $x \in \supp(\mu)$ that is not an $f$-boundary point,
  there exists $r > 0$ such that $\mu(B_r(x)-X_{f(x)})=0$; we fix some such
  point $x$ and distance $r$.

  Denote by $E_n$ the event that one of the $\kappa(n+1)$ candidates
  for $z_{n+1}$ lies in $B_{r/2}(x)$, while all the others lie in
  $X - Q_n^{r/2}$. Hence
  \begin{equation*}
    \Pr[E_n]
    =
    \kappa(n+1) \mu(B_{r/2}(x)) (1 - \mu(Q_n^{r/2}))^{\kappa(n+1)-1}
    \textrm{.}
  \end{equation*}
  Observe that $\mu(B_{r/2}(x))$ is positive (since $x \in \supp(\mu)$) and
  constant with respect to $n$.
  $\mu(Q_n^{r/2})$ does vary with $n$, of course, but
  $Q_{n+1}^{r/2} \subseteq Q_n^{r/2}$, so the measure is nonincreasing in $n$.
  Therefore the quantity $(1 - \mu(Q_n^{r/2}))$ can be bounded below by some
  $0 < \rho \leq 1$ for large $n$
  (almost always).\footnote{The case that $\mu(Q_n^{r/2})=1$ for large $n$ occurs
  with probability zero since it requires that
  all relevant $Z_n$ lie entirely outside $\supp(\mu)$,
  which can be excluded by the First Borel-Cantelli Lemma.}

  Suppose $\kappa(n) \leq k$ for infinitely many $n$; then for these $n$,
  \begin{equation*}
    \Pr[E_n] \geq \mu(B_{r/2}(q)) \rho^{k-1}
    \textrm{.}
  \end{equation*}
  Since the right-hand side is a constant greater than zero,
  the sum of the $\Pr[E_n]$ must diverge to infinity.
  On the other hand, if the sequence of $\kappa(n)$ has no bounded subsequence,
  then for sufficiently large $n$, $\kappa(n) \geq \mu(B_{r/2}(q))^{-1}$.
  Thus
  \begin{equation*}
    \Pr[E_n]
    \geq
    (1 - \mu(Q_n^{r/2}))^{\kappa(n+1)-1}
    \geq
    \rho^{\kappa(n+1)}
  \end{equation*}
  for large $n$; it follows that $\sum_{n=1}^\infty \Pr[E_n] = \infty$.
  So in either case the Second Borel-Cantelli Lemma tells us that
  $E_n$ occurs infinitely often with probability one.

  In the event $E_n$, let $q$ denote the candidate in $B_{r/2}(x)$.
  If $q \notin Q_n^{r/2}$, then there exists $i \leq n$ such that
  $z_i \in B_{r/2}(q)$---but since $B_{r/2}(q) \subseteq B_r(x)$,
  this contradicts $x \in Q^r$.
  Hence $q \in Q_n^{r/2}$, so it will be chosen as $z_{n+1}$,
  since all other candidates will have smaller $\Phi$-values.
  Consequently, $x \notin Q_{n+i}^r$ for any $i \geq 1$---which also
  contradicts $x \in Q^r$.

  Thus $x \notin Q^r$, meaning that a sample is eventually placed less than
  distance $r$ from $x$. Such a sample would have the same $f$-value as $x$
  with probability one, and we conclude that
  $\zeta_n(x) \to f(x)$ almost always.
\end{proof}

\begin{rem}
  \label{rem:borel-cantelli}
  Given any $x \in \supp(\mu)$,
  one can use the following ``brute force'' argument
  to prove that a sample will eventually be placed arbitrarily close to
  $x$: for any $\epsilon > 0$, $\mu(B_\epsilon(x))>0$, so \emph{all}
  of the $\kappa(n+1)$ candidates for $z_{n+1}$ will lie in
  $B_\epsilon(x)$ infinitely often because
  \begin{equation*}
    \sum_{n=1}^\infty \mu(B_\epsilon(x))^{\kappa(n+1)} = \infty
    \textrm{.}
  \end{equation*}
  This argument,
  while a formally correct use of the Second Borel-Cantelli Lemma,
  is sterile in substance since
  it does not suggest why selective sampling has any advantage over
  iid sampling; indeed, it suggests that iid sampling (ie, $\kappa(n)=1$)
  is superior. Our proof, on the other hand, demonstrates that it suffices for
  \emph{one} candidate to appear near $x$, provided that all others lie in
  regions with lower $\Phi$-values.

  A key theme of this paper will be avoiding ``brute force'' arguments;
  or, when they seem to be unavoidable, demonstrating how to mitigate their
  impact.
\end{rem}

\begin{rem}
  \label{rem:kappa}
  It is not difficult to see that $\kappa$ is admissible if
  $\kappa(n) \leq k$ infinitely often for some constant $k$.
  It is also possible that $\kappa(n) \to \infty$,
  provided that it grows very slowly; consider, for example,
  \begin{equation*}
    \kappa(n) \leq H_{\lceil \lg(n+1) \rceil} =
    \sum_{i=1}^{\lceil \lg(n+1) \rceil} \frac{1}{i}
    \textrm{.}
  \end{equation*}
  This can be proven admissible by means of the Cauchy Condensation Test.
  The practical value in selecting an unbounded $\kappa$ is to help
  ``find'' the subsets of $X$ with high $\Phi$-values, since the measure
  of such subsets decreases as $n \to \infty$.
\end{rem}

Cover and Hart establish the following lemma in \cite{cover-hart}.

\begin{lem}[Cover and Hart~\cite{cover-hart}]
  \label{lem:cover-hart}
  Suppose $X$ is a separable metric space, and let $S \subseteq X$.
  If $S \cap \supp(\mu) = \emptyset$, then $\mu(S)=0$.
\end{lem}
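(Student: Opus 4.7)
The plan is to pass from $S$ to the open set $U = X \setminus \supp(\mu)$, cover $U$ by balls of measure zero, and use separability to reduce this to a countable cover on which countable subadditivity bites.

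First I would note that $\supp(\mu)$ is closed (it is the complement of the union of all open sets of measure zero), so $U$ is open. Since $S \cap \supp(\mu) = \emptyset$, we have $S \subseteq U$, and it suffices to show $\mu(U) = 0$. By the definition of support, for each $x \in U$ there exists $\epsilon_x > 0$ such that $\mu(B_{\epsilon_x}(x)) = 0$. The family $\{B_{\epsilon_x}(x)\}_{x \in U}$ is then an open cover of $U$ by sets of measure zero.

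The crucial step is to replace this (possibly uncountable) cover by a countable one. Here is where separability is used: in a metric space, separability is equivalent to second-countability, hence to the Lindel\"of property, so the open cover above admits a countable subcover $\{B_{\epsilon_{x_i}}(x_i)\}_{i=1}^\infty$. Alternatively, and perhaps more concretely, I would let $D \subseteq X$ be a countable dense subset and consider the countable collection $\mathcal{B}$ of open balls $B_r(d)$ with $d \in D$, $r \in \mathbb{Q}^+$, and $\mu(B_r(d)) = 0$. A short argument using density shows that every $x \in U$ is contained in some member of $\mathcal{B}$: given $x$ with $\mu(B_{\epsilon_x}(x)) = 0$, pick $d \in D$ with $d(x,d) < \epsilon_x/4$ and a rational $r$ with $\epsilon_x/4 < r < \epsilon_x/2$, so that $x \in B_r(d) \subseteq B_{\epsilon_x}(x)$ and hence $\mu(B_r(d)) = 0$.

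Once a countable subcover $\{V_i\}_{i=1}^\infty$ of $U$ by measure-zero open sets is in hand, countable subadditivity of $\mu$ gives
\begin{equation*}
  \mu(S) \leq \mu(U) \leq \sum_{i=1}^\infty \mu(V_i) = 0
  \textrm{,}
\end{equation*}
which is the desired conclusion. The only real obstacle is the passage from an arbitrary open cover to a countable one; this is a routine consequence of separability in the metric-space setting, but it is the step that genuinely uses the hypothesis and would fail in a non-separable space.
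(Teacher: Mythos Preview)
The paper does not supply its own proof of this lemma; it is stated with attribution to Cover and Hart~\cite{cover-hart} and invoked as a black box. Your argument is correct and is the standard one: pass to the open complement $U = X \setminus \supp(\mu)$, cover it by measure-zero balls, use separability (either via the Lindel\"of property or, as in your alternative, via a countable dense set and rational radii) to extract a countable subcover, and finish with countable subadditivity. The triangle-inequality check that $B_r(d) \subseteq B_{\epsilon_x}(x)$ with your choice of $d$ and $r$ is also correct. There is nothing in the paper to compare against, but your proof stands on its own.
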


With \lemref{lem:cover-hart}, convergence in measure flows immediately from
\thmref{thm:dist-convergence-support}.

\begin{cor}
  Suppose $\mathcal{S}(\kappa,\Phi)$ is as described by
  \thmref{thm:dist-convergence-support}.
  If $X$ is separable, and the $f$-boundary has measure zero, then
  $\zeta_n \to f$ in measure with probability one.
\end{cor}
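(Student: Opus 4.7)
The plan is to deduce the corollary from \thmref{thm:dist-convergence-support} in two steps: first, argue that the exceptional set of $x$ has $\mu$-measure zero; then interchange the ``for every $x$'' and ``almost surely'' quantifiers to obtain a single almost-sure statement about pointwise convergence off a null set. Let $B \subseteq X$ denote the union of the $f$-boundary with $X \setminus \supp(\mu)$; by \thmref{thm:dist-convergence-support}, for every $x \in X \setminus B$ we have $\Pr[\zeta_n(x) \to f(x)] = 1$.

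To show $\mu(B) = 0$, the hypothesis directly handles the $f$-boundary, while \lemref{lem:cover-hart} (which is exactly where separability is used) applied to $X \setminus \supp(\mu)$---disjoint from $\supp(\mu)$ by construction---gives $\mu(X \setminus \supp(\mu)) = 0$. Hence $\mu(B) = 0$.

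For the quantifier swap, let $\Omega$ be the underlying sample-path probability space of $\mathcal{S}(\kappa,\Phi)$ and consider the subset $A = \{(\omega,x) \in \Omega \times (X \setminus B) : \zeta_n(x)(\omega) \not\to f(x)\}$. Each $x$-slice $A_x$ has $\Pr$-measure zero by the first step, so Tonelli's theorem yields $(\Pr \times \mu)(A) = 0$; applying Tonelli in the other order shows that for $\Pr$-almost every $\omega$ the slice $A^\omega$ has $\mu$-measure zero. Combined with $\mu(B) = 0$, this is exactly the statement that almost surely $\zeta_n \to f$ pointwise off a $\mu$-null set.

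The main (really the only) obstacle is verifying that $A$ is jointly measurable in $\Omega \times X$, so that Tonelli actually applies. Because $Y$ is countable and, for each fixed $n$, $\zeta_n(x)(\omega)$ is determined by a nearest-neighbor index that is continuous in $x$ for each fixed sample trajectory and measurable in $\omega$ for each fixed $x$, this is a routine check that I would dispatch briefly rather than in detail.
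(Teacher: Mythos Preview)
Your argument follows the same route as the paper's: identify the exceptional set as the union of the $f$-boundary with $X\setminus\supp(\mu)$, apply the hypothesis and \lemref{lem:cover-hart} to see this union is $\mu$-null, and conclude. The paper's own proof stops there and never addresses the interchange of the ``for all $x$'' and ``almost surely'' quantifiers; you go further and handle it explicitly via Tonelli, which is the more honest treatment. One small correction to your measurability sketch: for a fixed sample trajectory the nearest-neighbor index is \emph{not} continuous in $x$---it jumps across Voronoi cell boundaries---but it is piecewise constant on Borel pieces, hence Borel measurable in $x$, and that is all Tonelli needs.
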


\begin{proof}
  If $x \in \supp(\mu)$ is not an $f$-boundary point, then
  $\zeta_n(x) \to f(x)$ almost surely by \thmref{thm:dist-convergence-support}.
  Hence any point that does not converge to its $f$-value
  is either an $f$-boundary point or outside $\supp(\mu)$---but
  since the $f$-boundary has measure zero by assumption,
  and $\mu(X - \supp(\mu))=0$ by \lemref{lem:cover-hart},
  the set of all such points has measure zero.
\end{proof}

\subsection{Non-modal count}
\label{sec:heu:nmc}

The distance to sample set heuristic produces samples that tend to
``spread out'' over $X$ geometrically
(unlike iid sampling, which is sensitive only to $\mu$).
If one's goal is to approximate the true function $f$, however, this is still
not the most efficient arrangement of samples: what one really wants to do is
place ``very many'' samples near the $f$-boundary and ``very few'' samples
elsewhere.

Predictions under the nearest neighbor rule are naturally visualized as
\emph{Voronoi diagrams} (also known as \emph{Voronoi tessellations},
\emph{Voronoi decompositions}, or \emph{Dirichlet tessellations}) that
partition $X$ according to its nearest neighbor in $Z_n$;
see, for instance, \S4 of Devadoss and O'Rourke~\cite{devadoss-orourke}.
We motivate the point in the preceding paragraph by examining Voronoi
diagram predictions of the ``bat insignia'' \cite{bat-insignia} under
different sampling methods. For this example, $X$ is a subset of the
Euclidean plane; $Y = \{0,1\}$; and $\mu$ is the uniform distribution.

\begin{figure}
  \centering
  \subfloat[$N=1000$]{
    \includegraphics[scale=0.5]{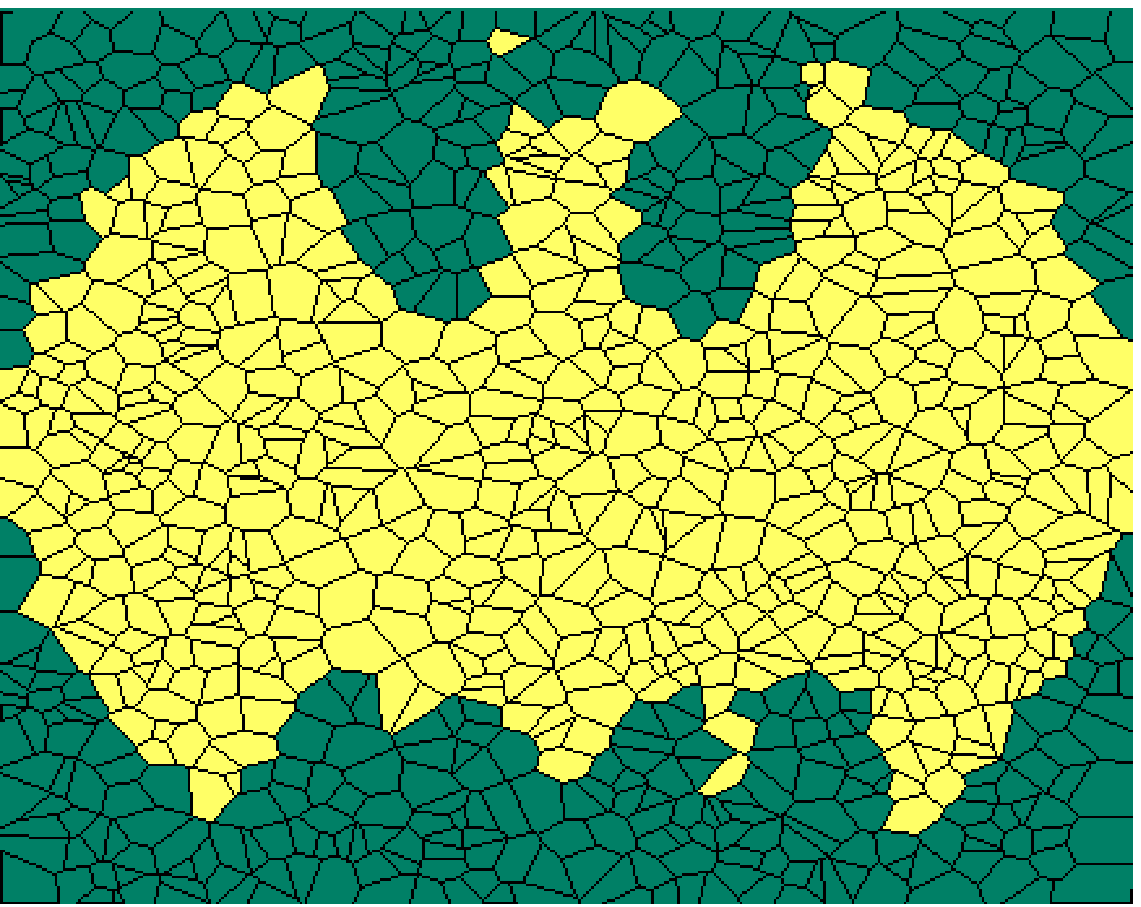}
  }
  \qquad
  \subfloat[$N=5000$]{
    \includegraphics[scale=0.5]{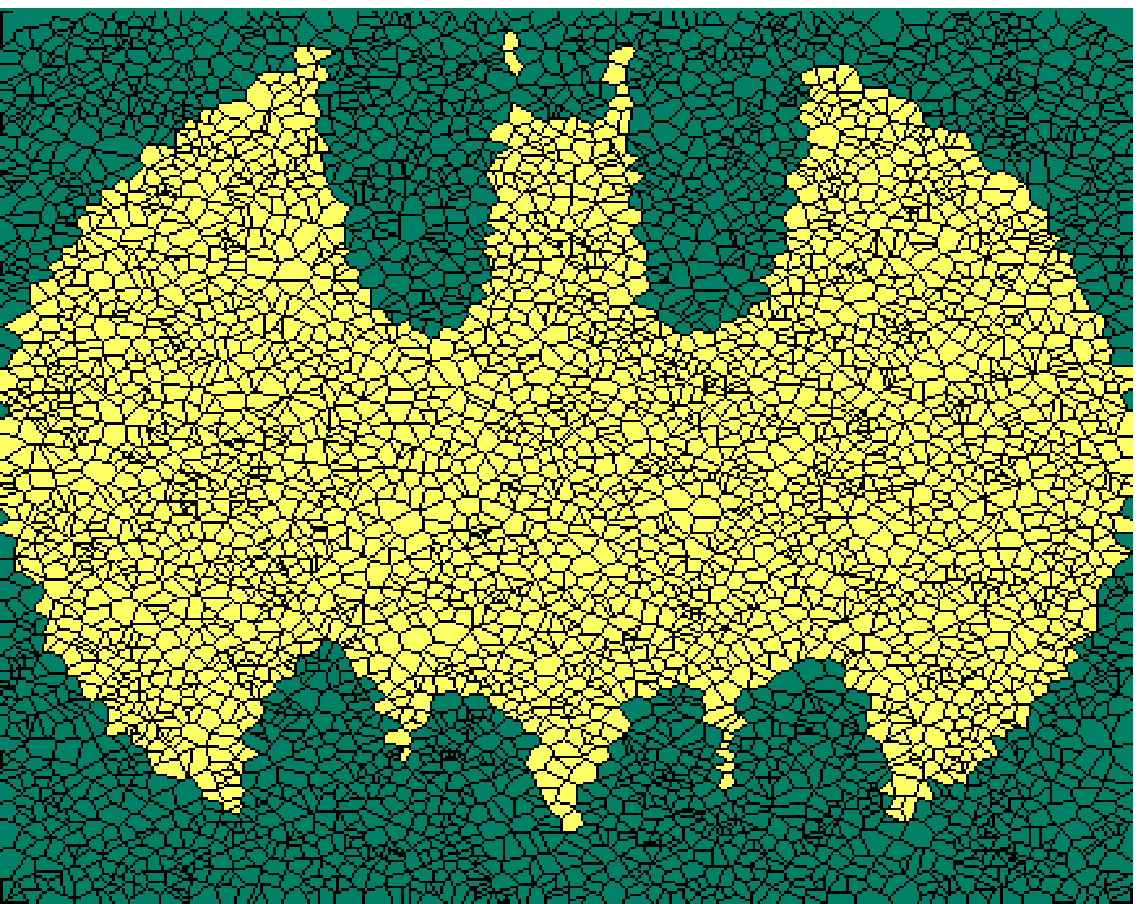}
  }
  \caption{Voronoi diagram predictions of the bat insignia~\cite{bat-insignia} with $N$ iid samples}
  \label{fig:vor-rnd}
\end{figure}

\begin{figure}
  \centering
  \subfloat[$N=1000$]{
    \includegraphics[scale=0.5]{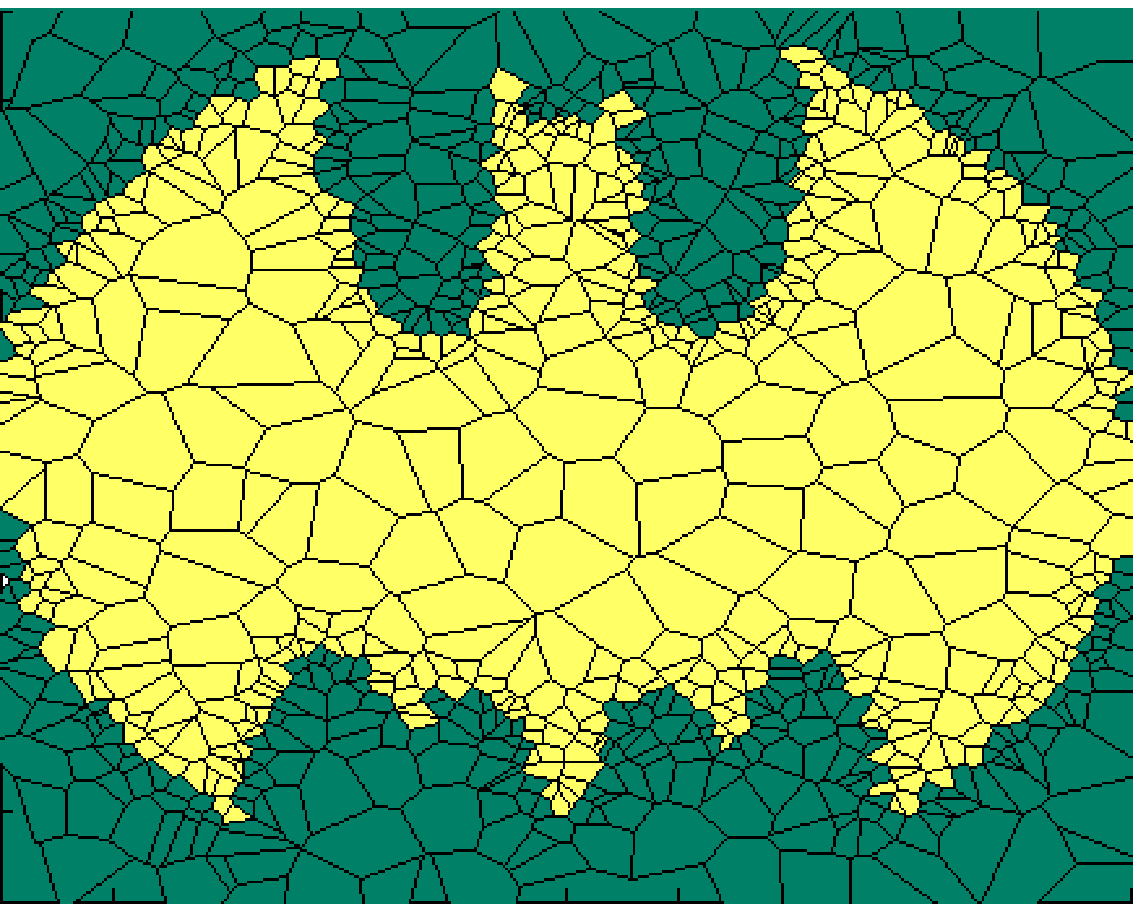}
  }
  \qquad
  \subfloat[$N=5000$]{
    \includegraphics[scale=0.5]{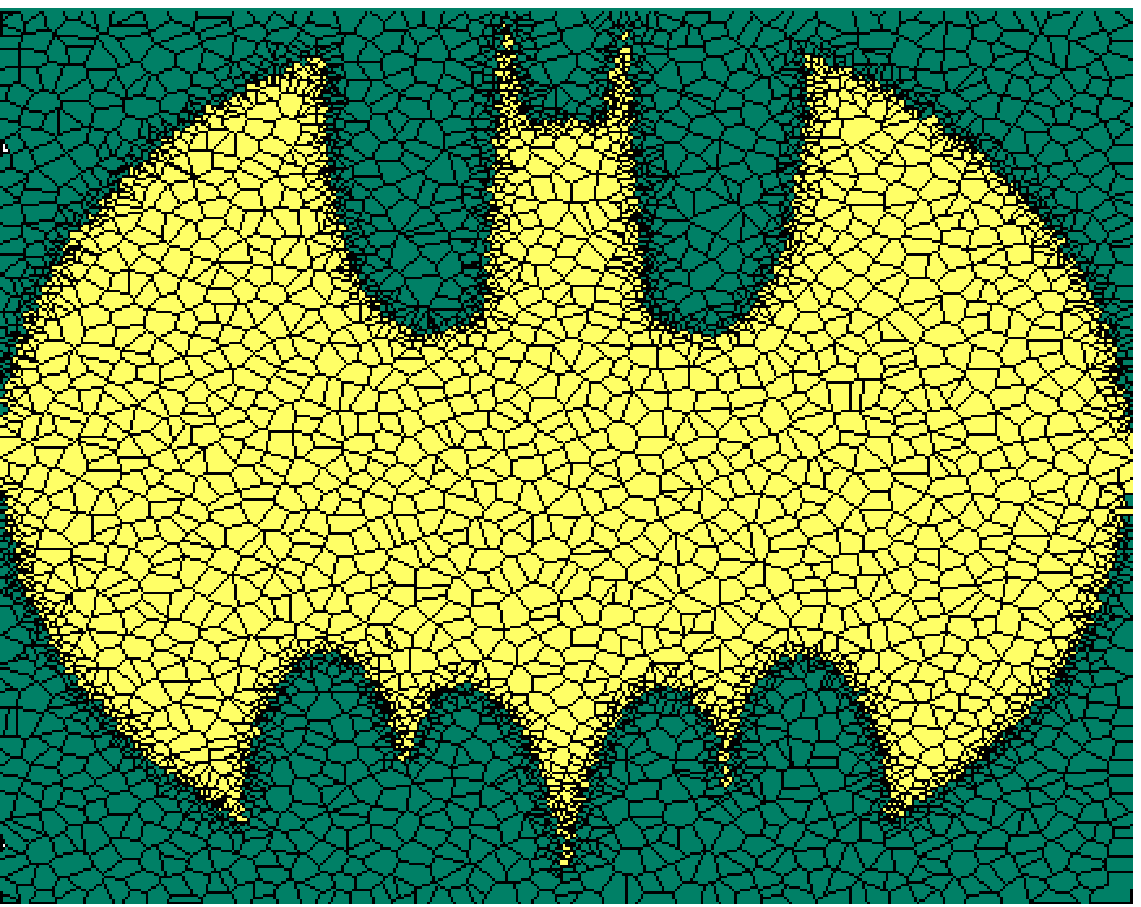}
  }
  \caption{Voronoi diagram predictions of the bat insignia~\cite{bat-insignia} with 20 initial iid samples followed by $N$ samples chosen according to $\mathcal{S}(\kappa,\Phi)$ with $\kappa(n)=10$ and $\Phi$ giving the non-modal count per the 6-nearest neighbors (see \secref{sec:heu:nmc:Knn})}
  \label{fig:vor-nmc}
\end{figure}

In \figref{fig:vor-rnd}, the diagrams are constructed over sets of
iid samples;
note that the cells are roughly the same size in each diagram,
exactly as one would expect with iid samples.
\figref{fig:vor-nmc}, on the other hand, shows diagrams constructed with
a selective sampling method;
here we observe that the cells are very small near the $f$-boundary and
large elsewhere. Comparing each diagram in \figref{fig:vor-rnd} with its
correspondent in \figref{fig:vor-nmc}, clearly the latter achieves a more
accurate prediction with (virtually) the same number of samples.

The \emph{non-modal count heuristic} $\Phi$ is defined as:
\begin{equation}
  \label{eq:nmc}
  \Phi(x,S) = |V_S(x)| - \modefreq_f(V_S(x))
\end{equation}
where $V_S : X \to \mathcal{P}(S)$ maps $x$ to some (possibly empty) set
of neighbors in $S$; and
$\modefreq_f(A)$ denotes the frequency of the mode of $A$ under $f$.
Below we give two alternatives for $V_S$ and prove convergence for both.

\subsubsection{Voronoi neighbors}
\label{sec:heu:nmc:vor}

We have already observed that Voronoi diagrams are the natural visualization
of the nearest neighbor rule, so it is natural to expect that their formal
properties might be useful. As it turns out, we will not require very much
from the theory of Voronoi diagrams, save the following definition:
given $x \in X$ and $S \subseteq X$, we say that $v \in S$ is a
\emph{Voronoi neighbor} of $x$ iff there exists a point $c \in X$ for which
$d(x,c) < d(v,c) \leq d(s,c)$ for any $s \in S$.
With reference to \eqref{eq:nmc} above, then, we can define $V_S(x)$ as the
set of all Voronoi neighbors of $x$ with respect to $S$.

Stated another way, $v$ is a Voronoi neighbor of $x$ if
there is some point in $X$ (ie, $c$) whose nearest neighbor in $S$ is $v$,
yet whose nearest neighbor in $S \cup \{x\}$ is $x$.
(Observe that $V_S(x) = \emptyset$ iff $x \in S$.
Furthermore, if $x \notin S$, then
its nearest neighbor in $S$ is always a Voronoi neighbor: let $c=x$.)
This definition is well-suited to analyze the evolution of predictions:
if a candidate with positive non-modal count is selected as a sample, the
prediction function changes, since at least one point will be predicted
differently.

The following lemma, which shows that Voronoi neighbors are preserved in
sufficiently small neighborhoods, will be helpful in our investigations.

\begin{lem}
  \label{lem:vor}
  Let $S \subseteq X$.
  For any $x \in X - S$, if $v \in V_S(x)$,
  then there exists $\epsilon > 0$ such that
  for all $x' \in B_\epsilon(x)$, $v \in V_S(x')$.
  Furthermore, if $v$ is the nearest neighbor of $x$ in $S$,
  then the previous statement holds for all $0 < \epsilon \leq d(x,v)$.
\end{lem}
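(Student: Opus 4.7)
The plan is to unpack the definition of Voronoi neighbor directly and then use the triangle inequality to show that the witness point survives a small perturbation of $x$.

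For the first part, since $v \in V_S(x)$ there exists a witness $c \in X$ with $d(x,c) < d(v,c) \leq d(s,c)$ for all $s \in S$. Set $\delta = d(v,c) - d(x,c) > 0$ and propose $\epsilon = \delta$. For any $x' \in B_\epsilon(x)$, the triangle inequality gives
\begin{equation*}
d(x',c) \leq d(x',x) + d(x,c) < \delta + d(x,c) = d(v,c) \leq d(s,c) \quad\text{for all } s \in S\text{,}
\end{equation*}
so the same point $c$ witnesses $v \in V_S(x')$. (Note that this also implicitly forces $x' \notin S$, since otherwise taking $s=x'$ would give the contradiction $d(v,c) \leq d(x',c) < d(v,c)$, so we get the required $x' \notin S$ for free and do not have to worry about $B_\epsilon(x)$ straying into $S$.)

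For the ``furthermore'' clause, the idea is simply to observe that when $v$ is the nearest neighbor of $x$ in $S$, we may choose the witness $c := x$ itself. Then $d(x,c) = 0$, $d(v,c) = d(x,v)$, and $d(s,c) = d(s,x) \geq d(x,v) = d(v,c)$ for all $s \in S$, so the slack is exactly $\delta = d(x,v)$. Repeating the triangle-inequality step above with this choice of $c$ shows that any $\epsilon$ with $0 < \epsilon \leq d(x,v)$ works, as claimed.

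There is no real obstacle here; the lemma is essentially a direct unpacking of the definition. The only minor point of care is recognizing that the second half of the statement is not a separate claim but rather a quantitative refinement: the freedom to pick $c = x$ when $v$ is the nearest neighbor is precisely what yields the concrete bound $d(x,v)$ on $\epsilon$, whereas for a general Voronoi neighbor the $\epsilon$ one obtains depends on how deep the witness inequality $d(x,c) < d(v,c)$ happens to be.
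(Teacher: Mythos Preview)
Your proof is correct and follows essentially the same approach as the paper: take the witness $c$ from the definition, set $\epsilon$ equal to the slack $d(v,c)-d(x,c)$, apply the triangle inequality, and then specialize to $c=x$ for the nearest-neighbor case. Your additional remark that $x'\notin S$ comes for free is a nice touch the paper omits, but otherwise the arguments are identical.
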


\begin{proof}
  By definition there exists $c \in X$ such that
  $d(x,c) < d(v,c) \leq d(s,c)$ for any $s \in S$.
  Let $0 < \epsilon \leq d(v,c) - d(x,c)$.
  By the triangle inequality, for any $x' \in B_\epsilon(x)$ we have
  \begin{equation*}
    d(x',c) \leq d(x',x) + d(x,c) < \epsilon + d(x,c) \leq d(v,c)
  \end{equation*}
  so $v \in V_S(x')$. The final claim is established by letting $c=x$.
\end{proof}

Before proceeding to our next convergence result, we will require two more
definitions. An $f$-connected set $R$ is said to be \emph{$f$-contiguous} iff
$R$ is a subset of $\supp(\mu)$ and does not contain $f$-boundary points.
A maximal $f$-contiguous set is an \emph{$f$-contiguous component}.

\begin{thm}
  \label{thm:nmc-vor-convergence-support}
  Let $\{z_n\}_{n=1}^{\infty}$ be determined by the process
  $\mathcal{S}(\kappa,\Phi)$ where
  $\kappa$ is such that, for any $0 < \rho \leq 1$,
  \begin{equation*}
    \sum_{n=1}^\infty \rho^{\kappa(n)} = \infty
  \end{equation*}
  and $\Phi$ is defined by \eqref{eq:nmc},
  with $V_S(x)$ denoting the Voronoi neighbors of $x$ with respect to $S$.
  If $x \in X$ is contained in an $f$-contiguous component of positive measure,
  then $\zeta_n(x) \to f(x)$ with probability one.
\end{thm}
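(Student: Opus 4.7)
The plan is to show directly that, almost surely, some sample is eventually placed within an open ball around $x$ on which $f$ is $\mu$-almost-everywhere equal to $f(x)$; once this happens, the nearest-neighbor prediction at $x$ is locked to $f(x)$ for all subsequent $n$.

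First, since $x$ lies in an $f$-contiguous component, $x$ is not an $f$-boundary point, so there exists $r > 0$ with $\mu(B_r(x) - X_{f(x)}) = 0$; and since $x \in \supp(\mu)$, we have $\rho := \mu(B_r(x)) > 0$. I will then define $E_n$ to be the event that \emph{all} $\kappa(n+1)$ candidates for $z_{n+1}$ lie in $B_r(x)$. Because the candidate batches in successive rounds are independent iid draws from $\mu$, the events $\{E_n\}$ are mutually independent with $\Pr[E_n] = \rho^{\kappa(n+1)}$. The hypothesis on $\kappa$ gives $\sum_n \Pr[E_n] = \infty$, so the Second Borel-Cantelli Lemma yields that $E_n$ occurs infinitely often almost surely. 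On any such $E_n$, whatever candidate the heuristic selects must lie in $B_r(x)$, regardless of $\Phi$-comparisons or tiebreaks, so almost surely at least one sample lies in $B_r(x)$.

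Let $z_{n^*}$ be the first sample falling in $B_r(x)$. Since the countable collection of all samples almost surely avoids the $\mu$-null set $B_r(x) \cap (X - X_{f(x)})$, we have $f(z_{n^*}) = f(x)$ almost surely. For every $n \geq n^*$, the nearest neighbor of $x$ in $Z_n$ is within distance $d(x, z_{n^*}) < r$ of $x$, hence also lies in $B_r(x)$, hence almost surely has $f$-value $f(x)$. Therefore $\zeta_n(x) = f(x)$ for all $n \geq n^*$, and $\zeta_n(x) \to f(x)$ almost surely.

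The main obstacle, which I do not see how to sidestep with the machinery developed so far, is that this is precisely the ``brute force'' route cautioned against in \remref{rem:borel-cantelli}: the non-modal count $\Phi$ may vanish on candidates lying deep inside a monochromatic component, so one cannot straightforwardly adapt the strategy of \thmref{thm:dist-convergence-support}, where a single favorable candidate near $x$ is engineered to out-score all others. The consolation, and presumably the paper's preferred framing, is that the ``all candidates in $B_r(x)$'' event need only occur \emph{once} for the prediction at $x$ to be pinned to $f(x)$ forever after---a qualitatively much weaker demand than the infinitely-often conclusion that Borel-Cantelli actually delivers, and also much weaker than the iid setting, in which later samples can perpetually spoil correctness at $x$ unless good samples keep recurring.
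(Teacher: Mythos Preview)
Your argument is correct: the candidate batches are independent, so the events $E_n$ are independent with $\Pr[E_n]=\mu(B_r(x))^{\kappa(n+1)}$, whose sum diverges by hypothesis; hence by Borel--Cantelli some $z_{n^*}$ lands in $B_r(x)$, and since every sample is one of the iid candidates it almost surely avoids the null set $B_r(x)\setminus X_{f(x)}$, locking $\zeta_n(x)=f(x)$ for all $n\geq n^*$. In fact your proof uses only that $x\in\supp(\mu)$ is off the $f$-boundary, so it yields a slightly stronger conclusion than stated---the positive-measure hypothesis on the component is never invoked.

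The paper takes a genuinely different route, precisely because it wants to exhibit the heuristic doing work rather than being bypassed. It argues by contradiction: assuming $x\in W=\limsup W_n$, it first (by brute force on the component $C$) places a sample in $C$, then locates a point $b$ where $\partial W$ crosses $C$, and uses \lemref{lem:vor} to show every point of a small ball $B_\epsilon(b)$ has Voronoi neighbors of both $f$-values, hence positive $\Phi$-value, for infinitely many $n$. It then finds $c\in B_\epsilon(b)\cap C\cap W$ and argues (as in \thmref{thm:dist-convergence-support}) that a \emph{single} candidate near $c$ beats all candidates outside $Q_{n_i}$, forcing $c\notin W$. This buys exactly the insight you flag as missing: the heuristic concentrates samples near the $f$-boundary, and correctness at $x$ is secured by a sample near $c$, not near $x$ itself. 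Your approach buys brevity and a marginally stronger statement, at the cost of saying nothing about $\Phi$. One small correction to your closing remark: even under iid sampling in this deterministic setting, once any sample lands in $B_r(x)$ the prediction at $x$ is locked forever---later samples cannot ``spoil'' it, since any closer sample must also lie in $B_r(x)$.
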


\begin{proof}
  Let $W_n \subseteq X$ be the set of points $w$ such that
  $\zeta_n(w) \neq f(w)$; and let $W = \limsup W_n$.
  We wish to show that no point contained in an $f$-contiguous component
  of positive measure is also contained in $W$.
  Suppose, for the sake of contradiction, that such a point $x$ exists.
  
  Let $C$ denote the $f$-contiguous component containing $x$. 
  We claim that a sample will eventually be placed in $C$ almost surely.
  By assumption, $\mu(C) > 0$.
  Let $G_n$ be the event that all of the $\kappa(n+1)$ candidates
  for $z_{n+1}$ lie in $C$; then
  \begin{equation*}
    \sum_{n=1}^\infty \Pr[G_n] = \sum_{n=1}^\infty \mu(C)^{\kappa(n+1)} = \infty
  \end{equation*}
  and the Second Borel-Cantelli Lemma tell us that $G_n$ occurs infinitely
  often.
  The claim follows immediately; but see \remref{rem:contiguous-components}.

  Every metric space has a unique (up to isometry) completion, so let
  $\bar{X}$ denote the completion of $X$.
  We can extend $f$ to every point in $\bar{X}$ in the following
  way: for any $\bar{x} \in \bar{X} - X$,
  if there exists $y \in Y$ and $\epsilon > 0$ such that
  $B_\epsilon(\bar{x}) \cap X \subseteq X_y$,
  then $f(\bar{x})=y$;
  otherwise $f(\bar{x})$ may be assigned any value in $Y$.
  We also extend $\mu$ to $\bar{X}$ by letting $\mu(\bar{X}-X)=0$.
  In this way we allow ourselves to work in $X$ or $\bar{X}$ interchangeably.

  Consider $\partial W \subseteq \bar{X}$, the boundary of $W$.
  We say that $\partial W$ \emph{crosses} $C$ at $b$ iff
  $b \in \partial W$ is not an $f$-boundary point and
  $B_\epsilon(b)$ has nonempty intersection with
  both $C \cap W$ and $C - W$ for all $\epsilon > 0$.
  If $\partial W$ does not cross $C$ at any point,
  then $x \in C$ implies $C \subseteq W$,
  which is inconsistent with a sample being placed in $C$;
  thus $\partial W$ almost certainly crosses $C$.

  Let $b$ denote a point at which $\partial W$ crosses $C$;
  since it is not an $f$-boundary point,
  there exists $r > 0$ such that $\mu(B_r(b) - X_{f(b)}) = 0$.
  From this it follows that $d(b,z_n) \geq r$ for all $n$ almost surely.
  Now for any $0 < \alpha < r$, we can find $b_1,b_2 \in B_{\alpha}(b)$
  such that $b_1 \notin W$ and $b_2 \in W$.
  Hence for infinitely many $n$, \lemref{lem:vor} implies that
  any point in $B_{r-\alpha}(b_1)$ has a Voronoi neighbor in $Z_n$ with the
  same $f$-value as $b$;
  and similarly that any point in $B_{r-\alpha}(b_2)$ has a Voronoi neighbor
  in $Z_n$ with $f$-value different from $b$. Taking $\alpha$ sufficiently
  small allows us to choose $0 < \epsilon < r$ such that
  \begin{equation*}
    B_{\epsilon}(b) \subset B_{r-\alpha}(b_1) \cap B_{r-\alpha}(b_2)
    \textrm{.}
  \end{equation*}

  Let $Q_n \subset X$ denote the points $q$ such that
  $\Phi(q,Z_n) > 0$, and let $Q = \limsup Q_n$.
  Because every point in $B_\epsilon(b)$ has (at least) two Voronoi neighbors
  with different $f$-values in infinitely many $Z_n$ (almost surely),
  there is an infinite subsequence $\{n_i\}_{i=1}^{\infty}$ such that
  $B_\epsilon(b) \cap X \subseteq Q_{n_i}$;
  so in fact, $B_\epsilon(b) \cap X \subseteq Q$ (with probability one).

  By the definition of $b$, there exists $c \in B_\epsilon(b) \cap C \cap W$.
  Since $c \in C$ means that $c$ is not an $f$-boundary point,
  there exists $\gamma_1 > 0$ such that $\mu(B_{\gamma_1}(c) - X_{f(c)})=0$;
  furthermore, there exists $\gamma_2 > 0$ such that
  $B_{\gamma_2}(c) \subseteq B_\epsilon(b)$.
  Set $\gamma = \min(\gamma_1,\gamma_2)$, and note that $\gamma$ is
  constant with respect to $n_i$.
  Let $E_i$ denote the event that one of the $\kappa(n_i+1)$ candidates for
  $z_{n_i+1}$ lies in $B_\gamma(c)$ while all others lie in $X-Q_{n_i}$. Thus
  \begin{equation*}
    \Pr[E_i] = \kappa(n_i+1) \mu(B_\gamma(c)) (1 - \mu(Q_{n_i}))^{\kappa(n_i+1)-1}
    \textrm{.}
  \end{equation*}
  Since $c \in C \subseteq \supp(\mu)$, $\mu(B_\gamma(c)) > 0$.

  Now for $i$ sufficiently large,
  $\mu(Q_{n_i})$ is arbitrarily close to $\mu(Q)$.
  If $\mu(Q)<1$, then $1-\mu(Q_{n_i})$ is bounded away from zero;
  we can then employ the same argument as in the proof of
  \thmref{thm:dist-convergence-support} to show that
  $E_i$ occurs infinitely often with probability one.
  In the event $E_i$, since the candidate in $B_\gamma(c)$ is the only
  one with positive non-modal count,
  we will have $z_{n_i+1} \in B_\gamma(c) \subseteq B_{\gamma_1}(c)$,
  which implies that $c \notin W$ almost surely.

  If, on the other hand, $\mu(Q)=1$, we resort to the ``brute force''
  event $F_n$ in which all candidates lie in $B_{\gamma_1}(c)$.
  (However, see \remref{rem:measure-Q}.) Now
  \begin{equation*}
    \sum_{n=1}^\infty \Pr[F_n] =
    \sum_{n=1}^\infty \mu(B_{\gamma_1}(c))^{\kappa(n+1)} = \infty
  \end{equation*}
  so the Second Borel-Cantelli Lemma says that $F_n$ occurs infinitely
  often with probability one.
  But if $F_n$ occurs, $c \notin W$ almost surely.

  We have, at last, obtained a contradiction.
  We conclude that if $x$ is contained in an $f$-contiguous component of
  positive measure,
  then almost surely $x \notin W$;
  in other words, $\zeta_n(x) \to f(x)$ with probability one.
\end{proof}

An important aspect of the proof is that, although we ultimately derive at
contradiction at the point $x$, this is effected by placing points close to
$c$, a point in the same $f$-contiguous component as $x$, but otherwise not
assumed to be close to $x$. This means that relatively sparse sampling in the
``middle'' of $f$-contiguous components suffices, as suggested by
\figref{fig:vor-nmc}.

\begin{cor}
  Suppose $\mathcal{S}(\kappa,\Phi)$ is as described by
  \thmref{thm:nmc-vor-convergence-support}.
  If $X$ is separable;
  the $f$-boundary has measure zero;
  and the union of all $f$-contiguous components with measure zero itself has
  measure zero;
  then $\zeta_n \to f$ in measure with probability one.
\end{cor}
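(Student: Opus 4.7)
The plan is to mimic the structure of the earlier corollary that followed Theorem~\ref{thm:dist-convergence-support}, but with an additional ``bad set'' accounting for points in $f$-contiguous components of measure zero. The key observation is that Theorem~\ref{thm:nmc-vor-convergence-support} delivers almost-sure pointwise convergence for every $x$ lying in an $f$-contiguous component of \emph{positive} measure; hence any point where $\zeta_n$ fails to converge to $f$ must fail one of the hypotheses of that theorem, and the task reduces to showing that the set of such points has $\mu$-measure zero.

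First, I would verify the following dichotomy: if $x \in \supp(\mu)$ and $x$ is not an $f$-boundary point, then $x$ belongs to some $f$-contiguous component. The singleton $\{x\}$ is connected and contained in $X_{f(x)}$, so it is $f$-connected; it lies in $\supp(\mu)$ and contains no $f$-boundary point, so it is $f$-contiguous. The union of all $f$-contiguous sets containing $x$ is connected (union of connected sets with a common point), $f$-connected (all lie in $X_{f(x)}$), a subset of $\supp(\mu)$, and contains no $f$-boundary point; hence it is the unique $f$-contiguous component through $x$. Consequently, any $x$ for which $\zeta_n(x) \not\to f(x)$ belongs to at least one of:
\begin{equation*}
  \text{(i) the $f$-boundary;} \quad
  \text{(ii) } X - \supp(\mu); \quad
  \text{(iii) the union of $f$-contiguous components of measure zero.}
\end{equation*}

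Each of these three sets has measure zero: (i) by hypothesis, (ii) by \lemref{lem:cover-hart} (using separability of $X$), and (iii) by hypothesis. The union of the three is therefore a null set, so $\zeta_n(x) \to f(x)$ for all $x$ outside a set of measure zero, almost surely; that is, $\zeta_n \to f$ in measure with probability one. The only subtle point is the verification that every non-boundary point of $\supp(\mu)$ actually sits inside some $f$-contiguous component rather than being somehow ``between'' them, but this follows from the Zorn's-lemma-style union argument sketched above; I do not anticipate any other substantive obstacle.
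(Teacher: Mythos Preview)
Your proposal is correct and is precisely the argument the paper intends: the paper does not give an explicit proof of this corollary, leaving it as the evident analogue of the corollary following \thmref{thm:dist-convergence-support}, and your proof is exactly that analogue with the extra null set (iii) added. Your explicit verification that every non-$f$-boundary point of $\supp(\mu)$ lies in some $f$-contiguous component is a nice touch that the paper leaves implicit.
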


\begin{rem}
  \label{rem:contiguous-components}
  If an $f$-contiguous component does not contain at least one sample,
  it is possible for it to be ``overlooked'' by the non-modal count
  heuristic. A ``brute force'' invocation of the Second Borel-Cantelli
  Lemma is therefore used to show that a sample is eventually placed in
  every $f$-contiguous component of positive measure. In practice, however,
  one would prefer to have samples placed in every such component through
  some \emph{a priori} process. For example, if it is known that every
  $f$-contiguous component (about which one cares) has measure at least
  $p$, placing $\max(20,5/p)$ initial iid samples gets a sample
  in each component with over 99 percent confidence
  (as a consequence of the Central Limit Theorem).
\end{rem}

\begin{rem}
  \label{rem:measure-Q}
  If $Q$, the set of points with non-modal counts that never settle to zero,
  has measure equal (or close) to one,
  the proof has a ``brute force'' aspect that
  we claimed to avoid in \remref{rem:borel-cantelli}.
  Some reflection on the problem will show that $X$ has to be somewhat
  ``weird'' for such $Q$ to occur, because we generally expect the Voronoi
  neighbors close to some point to ``block out'' any potential
  neighbors far from the point.
  \prpref{prp:measure-Q-zero} makes this insight precise by showing that,
  in fact, $\mu(Q)=0$ under certain reasonable conditions.
\end{rem}

Recall that a metric space with metric $d$ is a \emph{length space} iff
for any points $x,y$ in the space, $d_I(x,y) = d(x,y)$, where
$d_I$ is the infimum of the lengths of all paths from $x$ to $y$.

\begin{prp}
  \label{prp:measure-Q-zero}
  Suppose $\mathcal{S}(\kappa,\Phi)$ is as described by
  \thmref{thm:nmc-vor-convergence-support}, and assume that all the
  following obtain:
  \begin{enumerate}
  \item $X$ is separable;
  \item the completion of $X$ is a length space; and
  \item for every $x \in \supp(\mu)$ that is not an $f$-boundary point,
    there exists $\omega > 0$ such that $B_\omega(x) \subseteq \supp(\mu)$.
  \end{enumerate}
  Let $Q_n = \{ x \in X; \Phi(x,Z_n) > 0 \}$.
  If the $f$-boundary has measure zero, then
  $\mu( \limsup Q_n ) = 0$ with probability one.
\end{prp}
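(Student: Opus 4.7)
The plan is to reduce the measure-theoretic conclusion to a pointwise-in-$x$ claim via Fubini: since
\begin{equation*}
  \mathbb{E}\bigl[\mu(\limsup Q_n)\bigr] = \int_X \Pr\bigl[x \in Q_n \text{ infinitely often}\bigr]\,d\mu(x),
\end{equation*}
it suffices to show that for $\mu$-a.e.\ $x$, almost surely $x \notin Q_n$ for all large $n$. Assumption (3) together with $\mu(f\text{-boundary})=0$ lets me restrict attention to ``good'' $x \in \supp(\mu)$ admitting some $r=r(x)>0$ with $B_r(x) \subseteq \supp(\mu)$ and $\mu(B_r(x) \setminus X_{f(x)})=0$. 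Any sample landing in $B_r(x)$ then almost surely has $f$-value $f(x)$, so it suffices to show that eventually every Voronoi neighbor of $x$ lies inside $B_r(x)$.

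The length-space hypothesis is what makes this geometric conclusion available. The Voronoi cell $V_x^n := \{c \in X : d(x,c) \le d(s,c)\text{ for every } s \in Z_n\}$ is star-shaped about $x$ along near-geodesics of $\bar{X}$: for $c \in V_x^n$ and $p$ on a near-geodesic from $x$ to $c$, the triangle inequality gives
\begin{equation*}
  d(s,p) \ge d(s,c) - d(p,c) \ge d(x,c) - d(p,c) = d(x,p)
\end{equation*}
for every $s \in Z_n$, so $p \in V_x^n$ as well. I would combine star-shape with the length-space property to prove the geometric lemma: if $Z_n$ contains a sample within $\eta < r/4$ of every point on the sphere $\{p : d(x,p) = r/4\}$, then $V_x^n \subseteq B_{r/2}(x)$, because any $c$ with $d(x,c) \ge r/2$ would be blocked by a sample close to its $r/4$-point on the near-geodesic from $x$. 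A short further triangle-inequality computation then bounds $d(x,v) < r$ for every Voronoi neighbor $v$ of $x$.

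The remaining step---and, I expect, the main obstacle---is establishing this near-uniform sample density for $\mu$-almost every $x$. Separability furnishes a countable dense $\{y_i\} \subseteq X$, and for each $y_i \in \supp(\mu)$ and each rational $\delta > 0$ the ball $B_\delta(y_i)$ has positive measure; the ``brute force'' argument of \remref{rem:borel-cantelli} then yields that almost surely a sample eventually falls in $B_\delta(y_i)$, and a countable intersection places us on a probability-one event under which every such $B_\delta(y_i)$ is eventually populated. The obstacle is that the sphere of radius $r/4$ about $x$ need not be totally bounded, so no finite $y_i$-cover of it exists at any fixed $n$; I would address this by applying the blocking argument individually to each candidate witness $c \in V_x^n \setminus B_{r/2}(x)$---the length-space property lets us reroute the near-geodesic from $x$ through some $y_i$ arbitrarily close to the $r/4$-point on the geodesic to $c$, so that a sample inside a single already-populated $B_{r/8}(y_i)$ suffices to block $c$. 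A final countable intersection over a dense reference set $\{x_j\} \subseteq X$ and rational $r$ then lifts the conclusion to $\mu$-a.e.\ $x$.
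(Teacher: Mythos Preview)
Your overall strategy matches the paper's: fix a non-$f$-boundary point $q\in\supp(\mu)$, find a radius $\epsilon$ with $B_\epsilon(q)$ lying (a.s.) inside $X_{f(q)}\cap\supp(\mu)$, use Borel--Cantelli to populate a cover of the half-radius sphere $S_{\epsilon/2}(q)$ with samples, and then exploit the length-space property to show that any certifying point $c$ for a ``wrong-colored'' Voronoi neighbor is blocked by one of those nearby samples. The paper does this via a two-case analysis (according as $c$ lies inside or outside $B_{\epsilon/2}(q)$) rather than your star-shape formulation, and it leaves the Fubini reduction implicit, but these differences are cosmetic.

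The substantive divergence is precisely the total-boundedness issue you raise. The paper simply asserts that $S_{\epsilon/2}(q)$ ``can obviously be covered by a finite set of $\epsilon/4$-balls,'' i.e.\ it tacitly takes the sphere to be totally bounded---something not entailed by hypotheses (1)--(3) (an infinite-dimensional Hilbert space is a separable complete length space whose spheres are not totally bounded). So your concern is legitimate; the paper does not address it.

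Your proposed workaround, however, does not close the gap. Blocking a \emph{single} witness $c\in V_x^n\setminus B_{r/2}(x)$ indeed requires only one populated ball $B_{r/8}(y_i)$, but establishing $V_x^n\subseteq B_{r/2}(x)$ for all large $n$ requires blocking \emph{every} such $c$ simultaneously. Different $c$'s have different $r/4$-points on their near-geodesics and hence may call on different $y_i$; without total boundedness, infinitely many distinct $y_i$ may be needed, whereas at any finite time $n$ only finitely many of the balls $B_{r/8}(y_i)$ have been populated. There is thus no uniform $n_0$ after which all potential witnesses are blocked. The final countable intersection over $\{x_j\}$ and rational $r$ cannot repair this, because the missing uniformity is in $c$ for a \emph{fixed} $x$, not in $x$. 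In short, the finite cover of the sphere is the load-bearing step, and neither the paper's ``obviously'' nor your individual-$c$ rerouting justifies it under the stated hypotheses; both arguments are really only complete once some form of local compactness (which holds in the Euclidean examples the paper has in mind) is assumed.
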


\begin{proof}
  Let $Q = \limsup Q_n$.
  Suppose, for the sake of contradiction, that there exists
  $q \in Q \cap \supp(\mu)$ off the $f$-boundary.
  There exists $r > 0$ such that $\mu(B_r(q) - X_{f(q)}) = 0$;
  set $\epsilon = \min(r,\omega)$, where $\omega$ is as defined above.
  Note that $\epsilon$ does not depend on $n$.

  Let $\bar{X}$ denote the completion of $X$;
  we can use $X$ and $\bar{X}$ interchangeably by the same techniques used
  in the proof of \thmref{thm:nmc-vor-convergence-support}.
  Let $S_\gamma(q) \subseteq \bar{X}$
  denote the boundary of the $\gamma$-ball about $q$.
  Consider $S_{\epsilon/2}(q)$:
  this set can obviously be covered by a finite set of $\epsilon/4$-balls
  about some points $x_1,x_2,\ldots,x_m \in S_{\epsilon/2}(q)$.

  Let $E_{i,n}$ be the event that one of the $\kappa(n+1)$ candidates for
  $z_{n+1}$ lies in $B_{\epsilon/4}(x_i)$ while all others lie in the set
  \begin{equation*}
    L_{i,n} = B_{\epsilon/4}(x_i) \cup \{ l \in Q_n;
    \forall x \in B_{\epsilon/4}(x_i) : \Phi(l,Z_n) < \Phi(x,Z_n) \}
    \textrm{.}
  \end{equation*}
  Thus
  \begin{equation*}
    \Pr[E_{i,n}] =
    \kappa(n+1) \mu(B_{\epsilon/4}(x_i)) \mu(L_{i,n})^{\kappa(n+1)-1}
    \geq \mu(B_{\epsilon/4}(x_i))^{\kappa(n+1)}
    \textrm{.}
  \end{equation*}
  Because $x_i$ is a limit point,
  there exists $x_i' \in X$ arbitrarily close to $x_i$;
  thus some sufficiently small ball about $x_i'$ will be contained
  in $B_{\epsilon/4}(x_i)$.
  Furthermore, since
  \begin{equation*}
    x_i' \in B_\epsilon(q) \subseteq B_\omega(q) \subseteq \supp(\mu)
  \end{equation*}
  it must be that $\mu(B_{\epsilon/4}(x_i)) > 0$ and
  $\Pr[E_{i,n}]$ is bounded away from zero.
  So the sum of the $\Pr[E_{i,n}]$ diverges to infinity, and
  by the Second Borel-Cantelli Lemma,
  $E_{i,n}$ occurs infinitely often with probability one.
  Now in the event $E_{i,n}$, a candidate in $B_{\epsilon/4}(x_i)$
  will become $z_{n+1}$.
  We conclude that for $n$ sufficiently large, almost certainly
  $Z_n$ has at least one element in every $B_{\epsilon/4}(x_i)$
  where $1 \leq i \leq m$.
  
  Since $q \in Q$, there exist arbitrarily large $n$ such that
  $q \in Q_n$. For such $n$, $q$ has a Voronoi neighbor
  $z_a \in Z_n$ such that $f(z_a) \neq f(q)$; so almost surely
  $d(z_a,q) \geq r$, given that $q$ is not an $f$-boundary point.
  Let $c$ be a point that certifies that $z_a \in V_{Z_n}(q)$.
  
  Suppose $c \notin B_{\epsilon/2}(q)$.
  Now for any $\delta > 0$, there exists $c' \in S_{\epsilon/2}(q)$ with
  \begin{equation*}
    d(q,c') + d(c',c) < d(q,c) + \delta
    \textrm{.}
  \end{equation*}
  This is a consequence of the fact that $\bar{X}$ is a length space:
  one can think of $c'$ as the point where an almost-shortest path
  from $q$ to $c$ intersects $S_{\epsilon/2}(q)$.
  But for large $n$,
  as there is a sample in every $B_{\epsilon/4}(x_i)$,
  there will almost surely be $z_b \in Z_n$ such that
  $d(z_b,c') < \epsilon/2 = d(q,c')$. So:
  \begin{equation*}
    d(z_b,c) \leq d(z_b,c') + d(c',c) < d(q,c') + d(c',c) < d(q,c) + \delta
    \textrm{.}
  \end{equation*}
  Yet since $\delta$ may be arbitrarily small, in fact $d(z_b,c) < d(q,c)$.
  This contradicts the character of $c$.

  Alternately, suppose $c \in B_{\epsilon/2}(q)$; we take $c'$ as before,
  except this time on a path from $z_a$ to $c$.
  With $z_b$ as before, $d(z_b,c') < \epsilon/2 \leq r/2 \leq d(z_a,c')$.
  Reasoning exactly as above,
  $d(z_b,c) < d(z_a,c)$---which is also a contradiction.
  
  Both alternatives being contradictory, we conclude that $q \notin Q$;
  so in fact, $Q$ is disjoint from $\supp(\mu)$, save for the
  $f$-boundary.
  As $X$ is separable, \lemref{lem:cover-hart} applies,
  and given that the $f$-boundary has measure zero,
  we conclude that $\mu(Q)=0$.
\end{proof}

Observe that \prpref{prp:measure-Q-zero} is satisfied for
a Euclidean space; or indeed for the space of rational points of any
dimension, since its completion is Euclidean.

\subsubsection{$K$-nearest neighbors}
\label{sec:heu:nmc:Knn}

Reasoning in terms of Voronoi neighbors is desirable since it closely
reflects the evolution of the Voronoi diagrams $\zeta_n$;
however, calculation of the Voronoi neighbors is typically difficult.
The definition given above is unsuitable for computation, since it does not
suggest any method for actually finding the ``certifying'' point $c$.
We are unaware of an algorithm for finding Voronoi neighbors that does not
involve, in one way or another, constructing the Voronoi diagram itself.
Although there are a number of algorithms for constructing these
diagrams---some are presented in \S4 of
Devadoss and O'Rourke~\cite{devadoss-orourke}---they are
unlikely to be practical for our purposes,
except perhaps when $X$ is a Euclidean plane.

A reasonable alternative is to use metric closeness
as a kind of approximation for adjacency in the Voronoi sense.
Fix some integer $K \geq 2$; given $S \subseteq X$,
let $\mathcal{V}_x$ denote the family of $K$-sets
$V \subseteq S$ that minimize distance to $x$.
With reference to \eqref{eq:nmc}, then, let
\begin{equation*}
  V_S(x) =
  \left\{
  \begin{array}{ll}
    \emptyset, & \textrm{if $x \in S$}\\
    \arg \min_{V \in \mathcal{V}_x} \modefreq_f(V), & \textrm{otherwise}
  \end{array}
  \right.
  \textrm{.}
\end{equation*}
We say that $V_S(x)$ is the set of \emph{$K$-nearest neighbors} of $x$
with respect to $S$.\footnote{Note that defining $V_S(x)$ in this way causes
  $\Phi(x,S)=0$ when $x \in S$, and otherwise returns the neighbor set that
  maximizes $\Phi(x,S)$.}

\begin{lem}
  \label{lem:Knn}
  Let $S \subseteq X$ be countable; for any $x \in X-S$, write
  $S = \{ s_1 , s_2 , \ldots \}$ such that
  \begin{equation*}
    d(x,s_1) \leq d(x,s_2) \leq \cdots
    \textrm{.}
  \end{equation*}
  If there exists $K \geq 1$ such that $d(x,s_K) < d(x,s_{K+1})$, then
  for any
  \begin{equation*}
    0 < \epsilon \leq \frac{1}{2}[d(x,s_{K+1}) - d(x,s_K)]
  \end{equation*}
  it holds that for all $x' \in B_\epsilon(x)$,
  $\{s_1,s_2,\ldots,s_K\}$ are the $K$-nearest neighbors of $x'$
  with respect to $S$.
\end{lem}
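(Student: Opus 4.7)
The plan is to rely on two applications of the triangle inequality, exactly as in \lemref{lem:vor}, and then use the choice of $\epsilon$ to force a strict separation between the top-$K$ and the rest of $S$ at every $x' \in B_\epsilon(x)$.

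First I would note that for any $x' \in B_\epsilon(x)$ and any $s \in S$, the triangle inequality gives both
\begin{equation*}
  d(x',s) \leq d(x',x) + d(x,s) < \epsilon + d(x,s)
\end{equation*}
and $d(x',s) > d(x,s) - \epsilon$. Applying the first bound to each $s_i$ with $i \leq K$ and the second bound to each $s_j$ with $j > K$, I would then observe that the hypothesis on $\epsilon$ yields
\begin{equation*}
  d(x',s_i) < d(x,s_K) + \epsilon \leq d(x,s_{K+1}) - \epsilon < d(x',s_j)
\end{equation*}
since $d(x,s_K) + \epsilon \leq d(x,s_{K+1}) - \epsilon$ is equivalent to $2\epsilon \leq d(x,s_{K+1}) - d(x,s_K)$.

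From this strict separation it follows that every $K$-set minimizing distance to $x'$ must be contained in $\{s_1,\ldots,s_K\}$, and conversely must contain all of them (since any $s_j$ with $j > K$ is strictly farther from $x'$ than each $s_i$ with $i \leq K$). Hence $\mathcal{V}_{x'} = \{\{s_1,\ldots,s_K\}\}$ is a singleton, and the $\arg\min$ over $\modefreq_f$ in the definition of $V_S(x')$ is forced, giving $V_S(x') = \{s_1,\ldots,s_K\}$ as required.

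There is essentially no obstacle here beyond correctly interpreting the definition of $V_S$: once it is recognized that a gap between the $K$-th and $(K+1)$-th nearest distances at $x$ persists (shrunk by $2\epsilon$) at every $x' \in B_\epsilon(x)$, the top-$K$ set is the same set of indices, and the tiebreaking by $\modefreq_f$ is vacuous. The only minor subtlety worth flagging is that ties \emph{within} $\{s_1,\ldots,s_K\}$ (or within $\{s_{K+1},s_{K+2},\ldots\}$) are harmless, because the argument only uses the gap between position $K$ and position $K+1$.
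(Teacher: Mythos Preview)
Your proof is correct and follows essentially the same approach as the paper: both arguments apply the triangle inequality to bound $d(x',s_i)$ above by $d(x,s_K)+\epsilon$ for $i\leq K$ and $d(x',s_j)$ below by $d(x,s_{K+1})-\epsilon$ for $j>K$, then use the gap hypothesis to obtain the strict separation $d(x',s_i)<d(x',s_j)$. Your additional remark that $\mathcal{V}_{x'}$ is a singleton (so the $\modefreq_f$ tiebreak is vacuous) is a useful clarification that the paper leaves implicit.
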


\begin{proof}
  For any $x' \in B_\epsilon(x)$ and $1 \leq i \leq K < j$,
  we have the following:
  \begin{align*}
    d(x',s_i)
    &\leq d(x',x) + d(x,s_i)\\
    &< \epsilon + d(x,s_i)
    \leq \epsilon + d(x,s_K)\\
    &\leq \frac{1}{2}[ d(x,s_{K+1}) - d(x,s_K) ] + d(x,s_K)\\
    &= \frac{1}{2}[ d(x,s_{K+1}) + d(x,s_K) ]
    = d(x,s_{K+1}) - \frac{1}{2}[ d(x,s_{K+1}) - d(x,s_K) ]\\
    &\leq d(x,s_{K+1}) - \epsilon
    \leq d(x,s_j) - \epsilon\\
    &< d(x',s_j)
    \textrm{.}
  \end{align*}
  The lemma follows immediately.
\end{proof}

\begin{thm}
  \label{thm:nmc-Knn-convergence-support}
  Let $\{z_n\}_{n=1}^{\infty}$ be determined by the process
  $\mathcal{S}(\kappa,\Phi)$ where
   $\kappa$ is such that, for any $0 < \rho \leq 1$,
  \begin{equation*}
    \sum_{n=1}^\infty \rho^{\kappa(n)} = \infty
  \end{equation*}
  and $\Phi$ is defined by \eqref{eq:nmc},
  with $V_S(x)$ denoting $K$-nearest neighbors of $x$ with respect to $S$
  ($K \geq 2$).
  If $x \in X$ is contained in an $f$-contiguous component of positive
  measure,
  then $\zeta_n(x) \to f(x)$ with probability one.
\end{thm}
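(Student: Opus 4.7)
The plan is to mimic the proof of \thmref{thm:nmc-vor-convergence-support} with \lemref{lem:Knn} substituted for \lemref{lem:vor}. The opening reductions are identical: assume for contradiction that $x \in W \cap C$ where $C$ is some $f$-contiguous component of positive measure; the brute-force Borel-Cantelli event that all $\kappa(n+1)$ candidates lie in $C$ guarantees a sample eventually enters $C$ almost surely; and $\partial W$ must then cross $C$ at some point $b$ off the $f$-boundary, with $\mu(B_r(b) - X_{f(b)}) = 0$ for some $r > 0$ and $d(b, z_n) \geq r$ for all $n$ almost surely.

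For $0 < \alpha < r$, select $b_1, b_2 \in B_\alpha(b) \cap C$ with $b_1 \notin W$ and $b_2 \in W$. For infinitely many $n$ we have both $b_1 \notin W_n$ and $b_2 \in W_n$ simultaneously; at these $n$, the nearest of $b_1$ is some $u_n$ with $f(u_n) = f(b)$, and the nearest of $b_2$ is some $v_n$ with $f(v_n) \neq f(b)$. The central claim---the analog of $B_\epsilon(b) \cap X \subseteq Q$ in the Voronoi proof---is that for some fixed $\epsilon > 0$, along an infinite subsequence $n_i$ every point of $B_\epsilon(b) \cap X$ has both an $f(b)$-valued and a non-$f(b)$-valued sample among its $K$-nearest neighbors in $Z_{n_i}$, and so lies in $Q_{n_i}$. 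Granted this, the endgame is identical to the Voronoi proof: pick $c \in B_\epsilon(b) \cap C \cap W$ and $\gamma > 0$ with $B_\gamma(c) \subseteq B_\epsilon(b)$ and $\mu(B_\gamma(c) - X_{f(c)}) = 0$; apply the subtle event $E_i$ (one candidate in $B_\gamma(c)$, the rest in $X - Q_{n_i}$) when $\mu(Q) < 1$, or the brute-force event $F_n$ (all candidates in $B_\gamma(c)$) when $\mu(Q) = 1$. In either case a sample lands in $B_\gamma(c)$, pinning $c$'s prediction to $f(c) = f(b)$ thereafter and contradicting $c \in W$.

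The main obstacle is establishing the central claim. \lemref{lem:vor}'s ``furthermore'' clause gave preservation of a single Voronoi neighbor out to the full distance $d(b_1, u_n) \geq r - \alpha$, whereas \lemref{lem:Knn} only preserves the $K$-nearest set within half the gap between $d(b_1, s_K)$ and $d(b_1, s_{K+1})$---a quantity that is not uniformly bounded below in $n$. Overcoming this requires exploiting the hypothesis $K \geq 2$ together with the exclusion $Z_n \cap B_r(b) = \emptyset$: intuitively, any point sufficiently close to $b$ sees all of its nearest candidates pushed out to a common annulus around $b$, and among these must lie both a $u_{n_i}$-type and a $v_{n_i}$-type sample, forcing mixed $f$-values in the $K$-nearest set. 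If making this uniform geometric preservation rigorous proves awkward, the fallback is to apply brute-force Borel-Cantelli directly to $B_\gamma(c)$ throughout, sacrificing the sharper insight of \remref{rem:borel-cantelli} but still yielding the almost-sure contradiction.
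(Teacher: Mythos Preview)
Your structural outline matches the paper's, and you correctly identify the obstacle: \lemref{lem:Knn} only grants preservation within half the gap between consecutive neighbor distances, and this gap is not \emph{a priori} bounded below in $n$. However, your sketch for overcoming it---the ``common annulus'' picture---does not work as stated. Even with $Z_n \cap B_r(b) = \emptyset$ and $K \geq 2$, distinct points in a small ball about $b$ can have entirely disjoint $K$-nearest neighbor sets (picture many samples clustered near distance $r$ on opposite sides of $b$), so nothing forces both a $u_{n_i}$-type and a $v_{n_i}$-type sample into every nearby point's $K$-nearest set.

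The paper's device is to work directly with the ordered distances from $b$ itself rather than from $b_1,b_2$: write $d(b, b_n(1)) \leq d(b, b_n(2)) \leq \cdots$ and set
\[
  \epsilon_n \;=\; \tfrac{1}{2}\,\max_{2 \leq i \leq K}\bigl[\,d(b, b_n(i+1)) - d(b, b_n(i))\,\bigr],
\]
then case-split on $\epsilon = \liminf \epsilon_n$. If $\epsilon > 0$, \lemref{lem:Knn} guarantees that every point of $B_{\epsilon_n}(b)$ shares the same set $\{b_n(1),\ldots,b_n(i^*)\}$ of $i^*$-nearest neighbors (where $i^* \geq 2$ attains the maximum), and hence has these among its $K$-nearest. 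Now pick $q_1,q_2 \in B_\epsilon(b)$ on either side of $\partial W$: their single nearest neighbors both lie in this common set yet yield different predictions, so the set carries mixed $f$-values, and $B_\epsilon(b)\cap X \subseteq Q$ along the subsequence---precisely your central claim. If $\epsilon = 0$, the paper does not attempt to rescue the argument and simply falls back on brute force, relegating to \remref{rem:spheres-measure-zero} the observation that this case is pathological (it essentially requires spheres of positive measure). Your proposed fallback is therefore the paper's actual treatment of one branch of the dichotomy; what you are missing is the $\epsilon_n$ construction that isolates and handles the other.
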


\begin{proof}
  We proceed exactly as in the proof of
  \thmref{thm:nmc-vor-convergence-support}
  up to the identification of the point $b$ at which the boundary of
  $W$ crosses $C$.
  For any $Z_n$, let $b_n(i) \in Z_n$ denote the $i$th-nearest neighbor of $b$;
  that is,
  \begin{equation*}
    d(b,b_n(1)) \leq d(b,b_n(2)) \leq \cdots \leq d(b,b_n(n))
    \textrm{.}
  \end{equation*}
  Now consider
  \begin{equation*}
    \epsilon_n = \frac{1}{2} \max_{2 \leq i \leq K} d(b,b_n(i+1)) - d(b,b_n(i))
    \textrm{.}
  \end{equation*}
  Per \lemref{lem:Knn}, any point in $B_{\epsilon_n}(b)$
  will have at least $b_n(1)$ and $b_n(2)$ among its $K$-nearest neighbors
  with respect to $Z_n$
  (assuming $\epsilon_n > 0$).

  Suppose $\epsilon = \liminf \epsilon_n > 0$.
  As $b$ is on the boundary of points predicted wrongly infinitely often,
  there exist $q_1,q_2 \in B_{\epsilon}(b) \cap X$ such that
  $\zeta_n(q_1) \neq \zeta_n(q_2)$ for infinitely many $n$
  (ie, take $q_1$ on the $W$ side and $q_2$ on the $X-W$ side of
  $\partial W$).
  For these $n$, it must be that $f(b_n(i)) \neq f(b_n(j))$
  for some $1 \leq i < j \leq K$,
  which implies that $B_{\epsilon}(b) \cap X \subseteq Q$.
  We can proceed from here exactly as in the proof of
  \thmref{thm:nmc-vor-convergence-support}:
  find $c \in C \cap B_\epsilon(b)$ such that $c \in W$, etc.
  (However, see \remref{rem:measure-Q-Knn}.)

  If on the other hand $\epsilon = 0$, we must fall back upon
  ``brute force'' to demonstrate that a sample will eventually be placed
  arbitrarily close to any point in $\supp(\mu)$ with probability one.
  (If one considers the definition of $\epsilon_n$, however, the case that
  $\epsilon=0$ seems to defy common sense; \remref{rem:spheres-measure-zero}
  clarifies this intuition.)

  In any event, the theorem is proved.
\end{proof}

\begin{cor}
  Suppose $\mathcal{S}(\kappa,\Phi)$ is as described by
  \thmref{thm:nmc-Knn-convergence-support}.
  If $X$ is separable;
  the $f$-boundary has measure zero;
  and the union of all $f$-contiguous components with measure zero itself has
  measure zero;
  then $\zeta_n \to f$ in measure with probability one.
\end{cor}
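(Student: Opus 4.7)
\medskip

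The plan is to mirror the proof of the earlier corollary attached to \thmref{thm:dist-convergence-support}, simply replacing the invocation of that theorem with \thmref{thm:nmc-Knn-convergence-support}. Concretely, the task reduces to showing that the set of points $x \in X$ for which $\zeta_n(x) \not\to f(x)$ is contained in a union of three sets of measure zero.

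First I would observe that \thmref{thm:nmc-Knn-convergence-support} guarantees that $\zeta_n(x) \to f(x)$ almost surely for every $x$ lying in an $f$-contiguous component of positive measure. Consequently, any $x$ for which convergence might fail must fall into one of the following cases: (i) $x$ is an $f$-boundary point; (ii) $x$ lies outside $\supp(\mu)$; or (iii) $x$ lies in an $f$-contiguous component of measure zero. (A point of $\supp(\mu)$ off the $f$-boundary lies in some $f$-contiguous component, so this trichotomy is exhaustive.)

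Next I would bound the measure of each piece. Case (i) has measure zero by the hypothesis on the $f$-boundary. Case (ii) has measure zero by \lemref{lem:cover-hart}, which applies because $X$ is assumed separable. Case (iii) has measure zero by the hypothesis on the union of measure-zero $f$-contiguous components. The union of three null sets being null, we conclude that $\zeta_n \to f$ pointwise off a set of measure zero, which is the definition of $\zeta_n \to f$ in measure.

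There is no real obstacle here: the corollary is a direct bookkeeping consequence of the pointwise theorem, and the three hypotheses have been tailored precisely to annihilate the three exceptional regions. The only care required is to note that the almost-sure convergence in \thmref{thm:nmc-Knn-convergence-support} is with respect to the randomness in $\mathcal{S}(\kappa,\Phi)$, so the statement ``$\zeta_n \to f$ in measure with probability one'' follows because the null set in $X$ identified above does not depend on the sample realization.
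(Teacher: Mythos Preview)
Your proposal is correct and matches the paper's intended argument: the paper states this corollary without proof, relying on the reader to adapt the explicit proof given for the corollary to \thmref{thm:dist-convergence-support}, which is exactly what you do---split the possible failure set into the $f$-boundary, the complement of $\supp(\mu)$ (handled by \lemref{lem:cover-hart}), and the union of measure-zero $f$-contiguous components, each null by hypothesis.
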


\begin{rem}
  \label{rem:choice-K}
  What is a reasonable value for $K$? If we view the $K$-nearest neighbors
  as a kind of approximation of the Voronoi neighbors, then setting $K$
  to the expected number of Voronoi neighbors is reasonable.
  Suppose $X$ is an $m$-dimensional Euclidean space:
  letting $K$ be this expectation, Tanemura~\cite{tanemura03,tanemura05}
  reports the exact results
  \begin{align*}
    m=2 &\implies K = 6\\
    m=3 &\implies K = \frac{48 \pi^2}{35} + 2 = 15.535457\ldots\\
    m=4 &\implies K = \frac{340}{9} = 37.777\ldots
  \end{align*}
  and derives the value $K = 88.56\ldots$ empirically for $m=5$.
  We are not aware of results for $m \geq 6$, although it is reasonable to
  assume from the preceding that $K$ grows exponentially in $m$.
\end{rem}

\begin{rem}
  \label{rem:measure-Q-Knn}
  For the same reasons as noted in \remref{rem:measure-Q}, it is useful
  that $\mu(Q)$ be small.
  Suppose $X$ is separable, and let $x \in \supp(\mu)$ be off the $f$-boundary.
  If $x \in Q$, then there (almost surely) must exist $\gamma > 0$ such that
  $z_n \notin B_\gamma(x)$ for any $n$. But the Second Borel-Cantelli Lemma
  tells us that for infinitely many $n$, all of the $\kappa(n)$ candidates
  for $z_n$ appear in $B_\gamma(x)$, given that $\mu(B_\gamma(x)) > 0$;
  so $z_n \in B_\gamma(x)$, which is a contradiction.
  Thus $x \notin Q$,
  from which it follows by \lemref{lem:cover-hart} that $\mu(Q)=0$,
  provided that the $f$-boundary has measure zero.
  This is the analogue to \prpref{prp:measure-Q-zero} for the
  $K$-nearest neighbors variant of the non-modal count heuristic.
\end{rem}

\begin{rem}
  \label{rem:spheres-measure-zero}
  Let $b$, $b_n(i)$, $\epsilon_n$, and $\epsilon$ be as in the proof of
  \thmref{thm:nmc-Knn-convergence-support}. We have seen that we are forced
  into a ``brute force'' method when $\epsilon = 0$; how can this be mitigated?
  Let $r = \lim_{n \to \infty} d(b,b_n(1))$;
  since $b$ is not an $f$-boundary point, $r > 0$.
  Now consider the shells
  \begin{equation*}
    R_n(b) = \{ x \in X; r \leq d(b,x) \leq d(b,b_n(K)) \}
    \textrm{.}
  \end{equation*}
  If $\epsilon = 0$, it follows that
  \begin{equation*}
    R(b) = \lim_{n \to \infty} R_n(b) = \{ x \in X; d(b,x) = r \}
  \end{equation*}
  and furthermore $\mu(R_n(b)) \to \mu(R(b))$.
  Therefore if $\mu(R(b))=0$, then $\epsilon > 0$ \emph{asymptotically}
  almost always (since otherwise $K$ samples must be placed in a set
  of ever-decreasing measure).
  This condition is satisfied if every sphere---that is, every set of
  points at some fixed distance from a chosen point---has measure zero.
  This is a very natural hypothesis; eg, one intuitively expects that
  every $(n-1)$-dimensional manifold in an $n$-dimensional space has
  measure zero.
\end{rem}

\subsection{The $m$ nearest neighbors rule}
\label{sec:heu:mNNR}

For clarity of exposition we have used the nearest neighbor rule for
prediction, but it is not especially difficult to extend the results to the
$m$ nearest neighbors rule for any fixed integer $m > 1$.
Given any $S \subseteq X$, let $\mathcal{U}_x$ denote the family of
$m$-sets $U \subseteq S$ that minimize distance to $x$. Let
\begin{equation*}
  U_S(x) =
  \left\{ \begin{array}{ll}
    \{ x \}, &\textrm{if $x \in S$}\\
    \arg \min_{U \in \mathcal{U}_x} \modefreq_f(U), &\textrm{otherwise}
  \end{array} \right.
  \textrm{.}
\end{equation*}
Now instead of \eqref{eq:NNR}, we use
\begin{equation}
  \label{eq:mNNR}
  \zeta_n(x) = \mode_f(U_{Z_n}(x))
\end{equation}
for any $x \in X$, with ties broken uniformly at random.\footnote{Note that
  this definition predicts based on the most ambiguous set of neighbors.}

Consider the proof of \thmref{thm:dist-convergence-support}:
given $x \in \supp(\mu)$ off the $f$-boundary, there exists $r > 0$
such that $\mu(B_r(x) - X_{f(x)})=0$, and the proof shows that
some sample will be appear in $B_r(x)$ almost surely.
But in fact the proof, which uses the Second Borel-Cantelli Lemma,
establishes that this almost certainly happens \emph{infinitely often}.
In other words, given any $m > 1$, for sufficiently large $n$,
almost surely $|B_r(x) \cap Z_n| \geq m$, which implies that
$\zeta_n(x) = f(x)$ almost surely.
Hence \thmref{thm:dist-convergence-support} is valid under the $m$ nearest
neighbors rule.

We can also generalize \thmref{thm:nmc-Knn-convergence-support}, although we
must have $K > \lfloor m/2 \rfloor$. Referring back to the proof, let
\begin{equation*}
  \epsilon_n =
  \frac{1}{2} \max_{\lfloor m/2 \rfloor < i \leq K} d(b,b_n(i+1)) - d(b,b_n(i))
  \textrm{.}
\end{equation*}

Suppose $\epsilon = \liminf \epsilon_n > 0$.
Now any point in $B_{\epsilon}(b)$ will have
$b_n(1),\ldots,b_n(\lfloor m/2 \rfloor + 1)$
among its $K$-nearest neighbors per \lemref{lem:Knn}.
As in the original proof, we take two points in $B_{\epsilon}(b) \cap X$ on
either side of $\partial W$ and observe that this forces at least one of the
$K$-nearest neighbors to have a non-modal $f$-value for infinitely many $n$.
This establishes that $B_{\epsilon}(b) \cap X \subseteq Q$.
The rest of the proof proceeds as written, except that we now use the
power of the Second Borel-Cantelli Lemma to show that our chosen events
occur infinitely often, and can therefore meet any $m$.

\thmref{thm:nmc-vor-convergence-support} does not seem to
be easily generalized, given that the $m$ nearest neighbors have no
simply-characterized intersection with the Voronoi neighbors for
$m > 1$.

\section{Conclusion}
\label{sec:conc}

In this work we have articulated a general procedure for selective sampling
for nearest neighbor pattern classification. This procedure is guided by
a selection heuristic $\Phi$, and we proved that the nearest neighbor rule
prediction converges pointwise to the true function on the support of the
domain under each of the following choices for $\Phi$:
distance to sample set (\secref{sec:heu:dist});
non-modal count per Voronoi neighbors (\secref{sec:heu:nmc:vor});
and non-modal count per $K$-nearest neighbors (\secref{sec:heu:nmc:Knn}).
We also established convergence in measure as a corollary, provided that the
domain is separable. Finally, we showed that the first and third alternatives
for $\Phi$ are also valid under the $m$ nearest neighbors rule for any
$m > 1$ (\secref{sec:heu:mNNR}).

There are many avenues for future research; we describe
three open problems that seem particularly interesting and valuable.
\begin{enumerate}
\item For reasons explained in \secref{sec:intro},
  our investigations have taken place in a deterministic
  setting, as opposed to the more general probabilistic approach taken in
  the classical results
  \cite{cover-hart,cover-estimation,pattern-classification}.
  However, it ought to be possible to recover the probabilistic setting if
  the concept of an $f$-contiguous component can somehow be generalized to the
  situation where $f$ is a random variable.
\item Our theorems are silent on the rate of convergence, which is obviously
  an essential question in practical applications. Intuition and
  empirical results (such as Figures~\ref{fig:vor-rnd} and \ref{fig:vor-nmc})
  suggest that selective sampling converges more quickly than iid sampling,
  but we cannot say how much more quickly; nor do we yet understand how the
  prediction error is related to the number of samples.

  Kulkarni and Posner~\cite{kulkarni} derive a number of convergence results
  for arbitrary sampling in a very general setting. Although these results
  have limited direct practical impact, since arbitrary sampling may not
  converge to the true function at all (\secref{sec:prelim:fail}),
  they do include bounds on the expected value of the distance of the
  latest sample from all previous samples, where samples are chosen by a
  stationary process. Intuitively, this value decreases as the predictions
  become more accurate. By comparing the rate of decrease under selective
  sampling versus iid sampling, it may be possible to adduce convergence
  rates, at least in a relative fashion.
\item What are valid choices for the heuristic $\Phi$? We have given three,
  but we do not think this is exhaustive. It also occurs to us that
  the distance to sample set and the two non-modal count heuristics have
  very different convergence proofs---are there perhaps different classes
  of selection heuristics that can be identified? 
\end{enumerate}

Although these and other questions remain open, we nonetheless believe that
the results presented in this paper provide a firm theoretical foundation for
the use of selective sampling techniques in practical applications.

\section*{Acknowledgments}

The authors would like to thank their colleagues at mZeal Communications for
their material and intellectual support for this research. Some of the first
steps in this work were inspired by Lawrence ``David'' Davis and
Stewart W.~Wilson of VGO Associates. Any errors of course belong entirely to
the authors.

\bibliography{convergence-references}{}

\begin{thebibliography}{10}

\bibitem{cover-estimation}
T.~M. Cover.
\newblock Estimation by the nearest neighbor rule.
\newblock {\em IEEE Transactions on Information Theory}, 14(1):50--55, January
  1968.

\bibitem{cover-hart}
T.~M. Cover and P.~E. Hart.
\newblock Nearest neighbor pattern classification.
\newblock {\em IEEE Transactions on Information Theory}, 13(1):21--27, January
  1967.

\bibitem{devadoss-orourke}
S.~L. Devadoss and J.~O'Rourke.
\newblock {\em Discrete and Computational Geometry}.
\newblock Princeton University Press, 2011.

\bibitem{pattern-classification}
R.~O. Duda, P.~E. Hart, and D.~G. Stork.
\newblock {\em Pattern Classification}.
\newblock Wiley-Interscience, second edition, 2001.

\bibitem{fujii}
A.~Fujii, T.~Tokunaga, K.~Inui, and H.~Tanaka.
\newblock Selective sampling for example-based word sense disambiguation.
\newblock {\em Computational Linguistics}, 24(4):573--597, 1998.

\bibitem{kulkarni}
S.~R. Kulkarni and S.~E. Posner.
\newblock Rates of convergence of nearest neighbor estimation under arbitrary
  sampling.
\newblock {\em IEEE Transactions on Information Theory}, 41(4):1028--1039, July
  1995.

\bibitem{lindenbaum}
M.~Lindenbaum, S.~Markovitch, and D.~Rusakov.
\newblock Selective sampling for nearest neighbor classifiers.
\newblock {\em Machine Learning}, 54:125--152, 2004.

\bibitem{settles}
B.~Settles.
\newblock Active learning literature survey.
\newblock Computer Sciences Technical Report 1648, University of
  Wisconsin--Madison, 2009.

\bibitem{tanemura03}
M.~Tanemura.
\newblock Statistical distributions of {Poisson} {Voronoi} cells in two and
  three dimensions.
\newblock {\em Forma}, 18(4):221--247, 2003.

\bibitem{tanemura05}
M.~Tanemura.
\newblock Statistical distributions of the shape of {Poisson} {Voronoi} cells.
\newblock In {\em Proceedings of the Third Vorono\"i Conference on Analytic
  Number Theory and Spatial Tesselations}, pages 193--202, 2005.

\bibitem{wagner-edited}
T.~J. Wagner.
\newblock Convergence of the edited nearest neighbor.
\newblock {\em IEEE Transactions on Information Theory}, 19(5):696--697,
  September 1973.

\bibitem{wilson-edited}
D.~L. Wilson.
\newblock Asymptotic properties of nearest neighbor rules using edited data.
\newblock {\em IEEE Transactions on Systems, Man, and Cybernetics},
  2(3):408--421, July 1972.

\bibitem{bat-insignia}
WolframAlpha.
\newblock Bat insignia.
\newblock \url{http://www.wolframalpha.com/input/?i=bat-insignia}.

\end{thebibliography}
\bibliographystyle{plain}

\end{document}